\tikzstyle{block} = [rectangle, draw, fill=gray!20, text width=7em, text centered, rounded corners, minimum height=2em, font=\small]
\tikzstyle{line} = [draw, -latex']
\tikzstyle{dashedline} = [draw, -latex', dashed]  
\renewcommand{\algorithmiccomment}[1]{\bgroup\hfill//~#1\egroup}
\newtheorem{theorem}{Theorem}[section]
\newtheorem{remark}{Remark}[section]
\begin{document}

\begin{frontmatter}

\title{A Conformal Prediction Framework for Uncertainty Quantification in Physics-Informed Neural Networks}

\author[nusaddress]{Yifan Yu}

\author[ubcaddress]{Cheuk Hin Ho}

\author[nusaddress]{Yangshuai Wang}
\ead{yswang@nus.edu.sg}

\address[nusaddress]{Department of Mathematics, National University of Singapore, 10 Lower Kent Ridge Road, 119076, Singapore.}

\address[ubcaddress]{Department of Mathematics, University of British Columbia, Vancouver, V6T1Z2, Canada.}

\begin{abstract}
Physics-Informed Neural Networks (PINNs) have emerged as a powerful framework for solving PDEs, yet existing uncertainty quantification (UQ) approaches for PINNs generally lack rigorous statistical guarantees. In this work, we bridge this gap by introducing a distribution-free conformal prediction (CP) framework for UQ in PINNs. This framework calibrates prediction intervals by constructing nonconformity scores on a calibration set, thereby yielding distribution-free uncertainty estimates with rigorous finite-sample coverage guarantees for PINNs. To handle spatial heteroskedasticity, we further introduce local conformal quantile estimation, enabling spatially adaptive uncertainty bands while preserving theoretical guarantee. Through systematic evaluations on typical PDEs (damped harmonic oscillator, Poisson, Allen–Cahn, and Helmholtz equations) and comprehensive testing across multiple uncertainty metrics, our results demonstrate that the proposed framework achieves reliable calibration and locally adaptive uncertainty intervals, consistently outperforming heuristic UQ approaches. By bridging PINNs with distribution-free UQ, this work introduces a general framework that not only enhances calibration and reliability, but also opens new avenues for uncertainty-aware modeling of complex PDE systems.
\end{abstract}





\end{frontmatter}

\section{Introduction}
\label{sec:intro}

Physics-Informed Neural Networks (PINNs) have emerged as a versatile framework for solving partial differential equations (PDEs) by embedding physical laws into neural network training~\cite{raissi_physics-informed_2019, karniadakis2021physics}. Numerous variants have been developed to enhance accuracy, efficiency, and applicability~\cite{mcclenny2023self, liao2021deep, yu2018deep, yu2022gradient, toscano2025pinns, zang2020weak}, enabling PINNs to address complex geometries~\cite{xiang2022hybrid, costabal2024delta}, high-dimensional and multiscale problems~\cite{hu2025bias, wang2021eigenvector, li2020multi}, and inverse formulations~\cite{lu2021physics, chen2020physics} within a unified mesh-free paradigm. Applications span fluid mechanics~\cite{cai_physics-informed_fluid_2021, wessels2020neural}, heat transfer~\cite{cai_physics-informed_heat_2021-1, jalili2024physics}, and materials science~\cite{misyris_physics-informed_2020, zhang2022analyses}; see~\cite{cai_physics-informed_fluid_2021, cuomo2022scientific, de2024numerical, zhao2024comprehensive, shukla2024comprehensive} for comprehensive reviews. Despite this progress, existing PINNs are almost exclusively deterministic, lacking a principled mechanism for quantifying predictive uncertainty—an essential capability for reliable scientific computing.

Uncertainty quantification (UQ) is essential for reliable scientific computing. In PINNs, predictive uncertainty arises from data scarcity, model misspecification, and non-convex optimization, and is further compounded by the absence of a probabilistic formulation and the high-dimensional solution space. Existing strategies, such as dropout approximations, ensembles, stochastic gradient perturbations, and Bayesian PINNs (via Hamiltonian Monte Carlo or variational inference), introduce randomness to capture epistemic uncertainty~\cite{yang_b-pinns_2021, gal_dropout_2016, zou_uncertainty_2025, haitsiukevich_improved_2023, alhajeri_physics-informed_2022}. However, these methods rely on strong distributional assumptions and often lack rigorous coverage guarantees, limiting their reliability. This motivates the development of distribution-free, statistically principled frameworks that enable uncertainty estimation without explicit probabilistic modeling.

Conformal prediction (CP) is a statistically principled framework for uncertainty quantification that provides distribution-free prediction intervals with guaranteed coverage under minimal assumptions such as data exchangeability~\cite{shafer2008tutorial, angelopoulos2023conformal}. In contrast to the aforementioned UQ approaches, CP is a distribution-free post hoc wrapper: it requires no access to model internals and can be combined with any baseline uncertainty estimator to construct prediction intervals with user-specified significance levels and guaranteed finite-sample coverage. Extensions such as conditional coverage–guaranteed CP~\cite{gibbs_conformal_2024} have been proposed to enhance flexibility, though their applicability to PDE-based UQ remains unexplored.  

Recently, CP has attracted increasing attention in scientific machine learning due to its theoretical guarantees and computational efficiency. For example, Hu et al.~\cite{hu2022robust} combined CP with latent-space distance metrics to calibrate uncertainty for interatomic potentials, Moya et al. incorporated CP into Deep Operator Networks~\cite{moya_conformalized-deeponet_2025, Moya2023DeepONetGridUQ} and Kolmogorov-Arnold Networks~\cite{moya2025conformalized}, achieving finite-sample coverage guarantees for operator learning and function approximating tasks. Gopakumar et al.~\cite{gopakumar_uncertainty_2024} further demonstrated CP-based UQ in surrogate modeling for spatio-temporal systems, including PDE solvers and weather forecasting. These studies underscore the growing promise of CP as a general-purpose tool for UQ in scientific modeling. To the best of our knowledge, no prior work has integrated CP into PINNs or systematically evaluated its performance against heuristic UQ methods, leaving an important gap that motivates the present study.

In this work, we introduce a CP–based framework for UQ in PINNs. The method is distribution-free and guarantees finite-sample coverage, thereby addressing a central limitation of deterministic PINNs. Prediction intervals are constructed from nonconformity scores evaluated on a calibration set, requiring only minimal assumptions and leaving existing PINN architectures and training pipelines unchanged. We investigate three representative UQ methods: distance-based UQ, Monte Carlo droupout, and Bayesian posterior sampling. Their predictive variances are calibrated using conformal prediction as a post hoc tool. Calibration quality is assessed through empirical coverage and average coverage deviation. Beyond standard CP, we develop a localized conformal quantile estimation strategy that adapts prediction intervals to heteroskedastic regimes, yielding sharper yet statistically robust uncertainty bands that faithfully reflect spatial variability in PDE solutions. 

Extensive experiments on canonical PDE benchmarks (damped harmonic oscillation, Poisson, Allen–Cahn, and Helmholtz equations) demonstrate that the proposed framework consistently yields reliable and well-calibrated uncertainty estimates. The local CP, in particular, accurately identifies regions of elevated uncertainty while maintaining sharper intervals than standard CP. The proposed framework not only advances the theoretical foundations of UQ for PDE solvers, but also provides a practical, extensible methodology for uncertainty-aware scientific computing.

\paragraph{Outline} This paper is organized as follows. Section~\ref{sec:pinn} reviews the formulation of physics-informed neural networks.
Section~\ref{sec:cp&uq} presents the conformal prediction framework for PINNs, together with the heuristic UQ baselines used for comparison. Section~\ref{sec:numerics} reports numerical experiments on benchmark PDEs, with detailed evaluations of uncertainty quantification performance. Section~\ref{sec:extension} discusses extensions of the method based on localized conformal prediction, presenting an algorithm with rigorous coverage guarantees together with numerical validation. 
Finally, Section~\ref{sec:conclusion} concludes the paper and outlines future research directions.

\section{Background: Physics-Informed Neural Networks (PINNs)}
\label{sec:pinn}

In this section, we provide a brief overview of PINNs. In Section~\ref{sec:sub:basics}, we introduce the fundamental formulation of PINNs, which serves as the foundation for our proposed method. In Section~\ref{sec:sub:analysis}, we discuss existing analytical perspectives on PINNs and clarify their connection to uncertainty quantification.

\subsection{Basic Framework}
\label{sec:sub:basics}

Let $\Omega \subset \mathbb{R}^d$ be a bounded spatial domain and $T > 0$ a terminal time. We consider the generic initial-boundary value problem for a solution field $u: \Omega \times [0, T] \to \mathbb{R}^{n}$:
\begin{align}
\mathcal{L}[u](\mathbf{x},t) &= f(\mathbf{x},t), &&\quad (\mathbf{x},t) \in \Omega \times (0,T], \label{eq:pde}\\
u(\mathbf{x},0) &= u_0(\mathbf{x}), &&\quad \mathbf{x} \in \Omega, \label{eq:ic} \\
\mathcal{B}[u](\mathbf{x},t) &= g(\mathbf{x},t), &&\quad (\mathbf{x},t) \in \partial\Omega \times (0,T], \label{eq:bc}
\end{align}
where $\mathcal{L}[\cdot]$ is a (possibly nonlinear) differential operator acting on $u$, $f$ is a known source term, and $\mathcal{B}[\cdot]$ denotes a boundary operator, such as Dirichlet or Neumann conditions. The functions $u_0$ and $g$ specify the initial and boundary data, respectively.

PINNs aim to approximate the solution $u$ using a neural network $u_\theta : \Omega \times [0,T] \to \mathbb{R}^{n}$, parameterized by $\theta \in \mathbb{R}^{d_\theta}$. The surrogate $u_\theta$ is trained to simultaneously satisfy the governing PDE~\eqref{eq:pde}, along with its associated initial and boundary conditions~\eqref{eq:ic}–\eqref{eq:bc}, and to fit any available observational data. This approach enables a seamless integration of data and physics, where the corresponding loss components are often treated in a multi-objective optimization framework~\cite{rohrhofer2023data}.

Suppose we are given a set of observation data $\mathcal{D}_{\mathrm{data}} := \big\{(x^{(i)}, u^{(i)})\big\}_{i=1}^{N_{\rm d}} := \big\{ (\mathbf{x}^{(i)}, t^{(i)}, u^{(i)}) \big\}_{i=1}^{N_{\rm d}}$ collected at discrete sensor locations. These data represent noisy measurements of the true solution $u$. To enforce physical consistency, we introduce three additional point sets: $\mathcal{D}_{\mathrm r} = \big\{(\mathbf{x}^{(j)}_{\mathrm r}, t^{(j)}_{\mathrm r})\big\}_{j=1}^{N_{\mathrm r}} \subset \Omega \times (0,T]$ for the PDE residual, $\mathcal{D}_{\mathrm i} = \big\{(\mathbf{x}^{(l)}_{\mathrm i}, 0)\big\}_{l=1}^{N_{\mathrm i}} \subset \Omega \times \{0\}$ for the initial condition, and $\mathcal{D}_{\mathrm b} = \big\{(\mathbf{x}^{(k)}_{\mathrm b}, t^{(k)}_{\mathrm b})\big\}_{k=1}^{N_{\mathrm b}} \subset \partial\Omega \times (0,T]$ for the boundary condition.
Using these sets, we define the following empirical loss function:
\begin{equation}
\mathrm{Loss}(\theta) 
= \lambda_{\mathrm{data}} \, \mathcal{L}_{\mathrm{data}}(\theta) 
+ \lambda_{\mathrm{pde}} \, \mathcal{L}_{\mathrm{pde}}(\theta) 
+ \lambda_{\mathrm ic} \, \mathcal{L}_{\mathrm{ic}}(\theta) 
+ \lambda_{\mathrm b} \, \mathcal{L}_{\mathrm{bc}}(\theta),
\label{eq:pinn_loss_full}
\end{equation}
with non-negative weights $\lambda_{\mathrm{data}}, \lambda_{\mathrm{pde}}, \lambda_{\mathrm{ic}}, \lambda_{\mathrm{bc}}$ balancing the different components. The individual loss terms are defined as:
\begin{align}
\mathcal{L}_{\mathrm{data}}(\theta) 
&= \frac{1}{N_{\mathrm{d}}} \sum_{i=1}^{N_{\mathrm{d}}} \left\| u_\theta(\mathbf{x}^{(i)}, t^{(i)}) - u^{(i)} \right\|_2^2, \label{eq:loss_data} \\
\mathcal{L}_{\mathrm{pde}}(\theta) 
&= \frac{1}{N_{\mathrm{r}}} \sum_{j=1}^{N_{\mathrm{r}}} \left\| \mathcal{L}[u_\theta](\mathbf{x}^{(j)}_{\mathrm{r}}, t^{(j)}_{\mathrm{r}}) - f(\mathbf{x}^{(j)}_{\mathrm{r}}, t^{(j)}_{\mathrm{r}}) \right\|_2^2, \\
\mathcal{L}_{\mathrm{ic}}(\theta) 
&= \frac{1}{N_{\mathrm{i}}} \sum_{l=1}^{N_{\mathrm{i}}} \left\| u_\theta(\mathbf{x}^{(l)}_{\mathrm{i}}, 0) - u_0(\mathbf{x}^{(l)}_{\mathrm{i}}) \right\|_2^2, \\
\mathcal{L}_{\mathrm{bc}}(\theta) 
&= \frac{1}{N_{\mathrm{b}}} \sum_{k=1}^{N_{\mathrm{b}}} \left\| \mathcal{B}[u_\theta](\mathbf{x}^{(k)}_{\mathrm{b}}, t^{(k)}_{\mathrm{b}}) - g(\mathbf{x}^{(k)}_{\mathrm{b}}, t^{(k)}_{\mathrm{b}}) \right\|_2^2.
\end{align}
where $\|\cdot\|_2$ denotes the Euclidean norm; other norm choices, such as Sobolev norms~\cite{fischer2020sobolev}, can also be considered depending on the PDE context~\cite{jiao2024stabilized, son2023sobolev}.

The optimization objective is to determine the optimal parameters $\theta^*$ that minimize the total loss defined in~\eqref{eq:pinn_loss_full}, i.e., $\theta^* = \arg\min_{\theta \in \mathbb{R}^{d_\theta}} \mathrm{Loss}(\theta)$. The resulting network $u_{\theta^*}$ serves as a surrogate for the true solution $u$, trained to satisfy both the empirical data and the governing physical laws. An illustration of this learning framework is provided in Figure~\ref{fig:pinn}.

\begin{figure}[htbp]  
  \centering
  \includegraphics[width=0.9\linewidth]{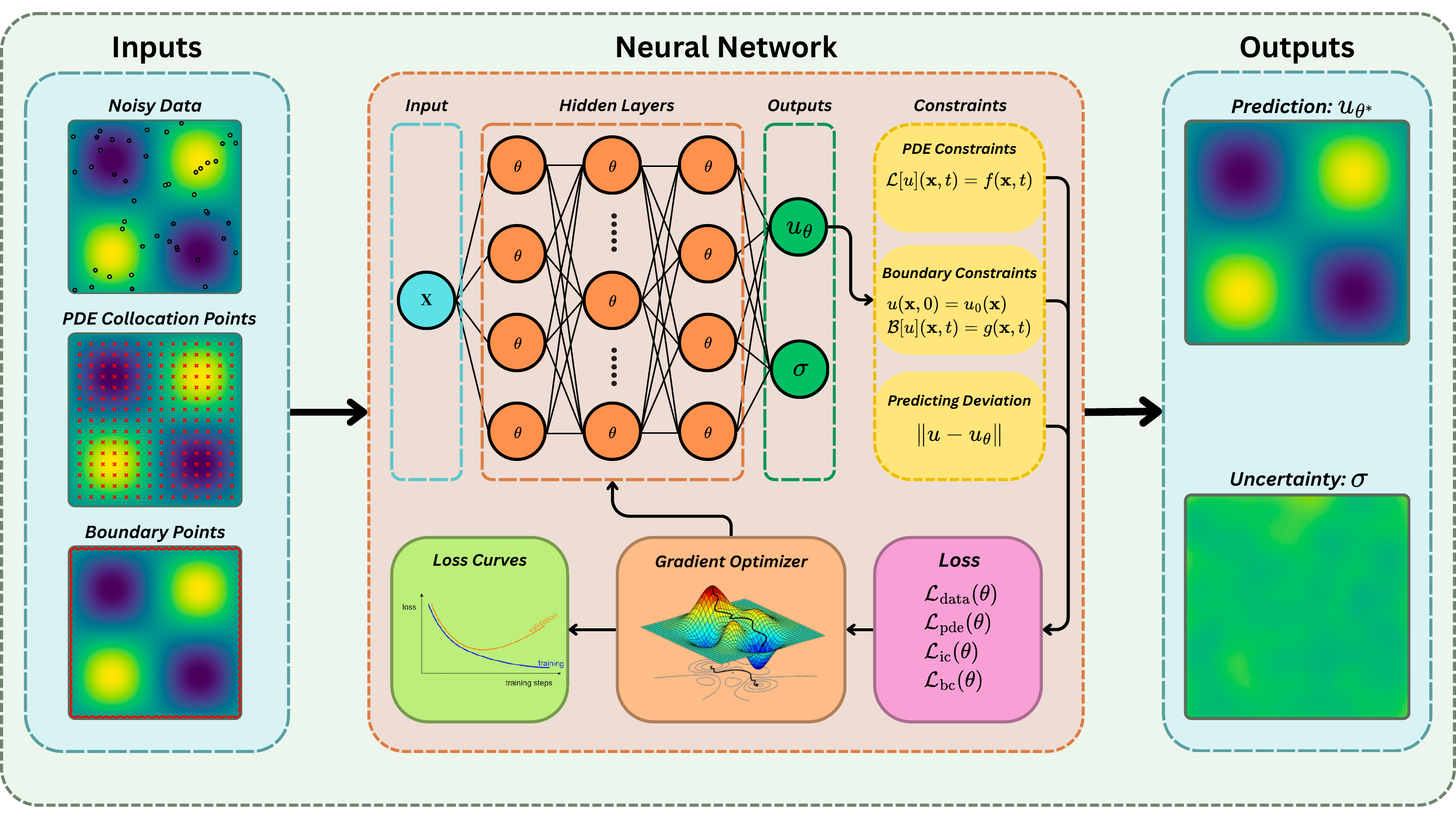}
  \caption{Workflow of uncertainty quantification in PINNs, illustrating the integration of noisy data, PDE and boundary constraints into a neural network to produce predictions and statistically valid uncertainty estimates.}
  \label{fig:pinn}
\end{figure}

\begin{remark}
Although the physics loss $\mathcal{L}_{\mathrm{pde}}$ enforces equation consistency, it is evaluated only at finitely many collocation points and thus may not uniquely determine the solution—particularly under sparse or high-dimensional sampling. As a result, minimizing $\mathcal{L}_{\mathrm{pde}}$ alone can yield non-physical or spurious solutions~\cite{rohrhofer2023data, hua2023physics}. The data loss $\mathcal{L}_{\mathrm{data}}$ is therefore crucial for anchoring the surrogate to observed values and improving generalization~\cite{mishra2023estimates}. In this work, we adopt a hybrid training strategy that combines both physical and data losses to ensure solution fidelity and robustness~\cite{raissi_physics-informed_2019, chen2020physics, raissi2020hidden}.
\end{remark}

\subsection{Analysis and Error Control}
\label{sec:sub:analysis}
Motivated by the universal approximation theorem for neural networks~\cite{augustine2024survey}, the PINNs framework can be interpreted as a mesh-free, residual-minimization-based solver~\cite{raissi_physics-informed_2019, karniadakis2021physics}. Let $\mathcal{R}(u_\theta)(\mathbf{x},t) := \mathcal{L}[u_\theta](\mathbf{x},t) - f(\mathbf{x},t)$ denote the pointwise residual of the PDE. For sufficiently regular solutions, the residual $\mathcal{R}(u_\theta)$ serves as a practical indicator~\cite{mao2023physics}. A common {\it a posteriori} surrogate for the global error is given by the squared residual norm
\[
\| \mathcal{R}(u_\theta) \|_{L^2(\Omega \times (0,T))}^2 
\approx \frac{1}{N_{\rm r}} \sum^{N_{\rm r}}_{j=1} \left\| \mathcal{R}(u_\theta)(\mathbf{x}^{(j)}_{\rm r},t^{(j)}_{\rm r}) \right\|^2,
\]
which provides an empirical measure of how well the surrogate satisfies the governing equations.

However, such residual-based quantities are limited: they reflect local violations of the PDE but neither quantify predictive confidence in unseen regions nor yield statistically meaningful bounds on the true solution error $\|u - u_\theta\|$. Although recent studies have established a priori error estimates for PINNs under certain restrictive assumptions~\cite{mishra2023estimates}, these results remain largely theoretical and may not translate directly into practical uncertainty quantification.



This motivates the development of UQ frameworks with finite-sample, distribution-free guarantees. We focus on characterizing the uncertainty of $u_\theta$ in regions weakly constrained by data or physics, and propose a conformal prediction-based approach that provides statistically valid coverage without distributional assumptions.

\section{Conformal Prediction for Uncertainty Quantification}
\label{sec:cp&uq}

\subsection{Heuristic Uncertainty Quantification}
\label{sec:sub:original_uq}

To establish uncalibrated baselines for comparison with our CP approach, we consider five heuristic uncertainty estimation strategies within the PINN framework. These methods approximate the uncertainty of the surrogate solution $u_\theta(x):=u_\theta(\mathbf{x},t)$ (denoted simply as $x$) on the dataset $\mathcal D_{\mathrm{data}}=\{x_i=(\mathbf{x}_i,t_i)\}_{i=1}^{N_{\rm d}}$, without formal statistical calibration. The baselines are grouped into three categories:

\begin{enumerate}[label=(\roman*)]
\item Geometric or latent-space distance: A non-Bayesian heuristic that estimates uncertainty based on the distance between a test point and the training data manifold in latent space~\cite{hu2022robust}.

\item Monte Carlo dropout~\cite{gal_dropout_2016}: A Bayesian approximation technique uses random dropout during inference, producing an ensemble of outputs that reflect predictive variance.

\item Bayesian posterior sampling: This category includes methods such as variational inference (VI)~\cite{blei2017variational} and Hamiltonian Monte Carlo (HMC)~\cite{yang_b-pinns_2021}, which aim to approximate the posterior distribution to generate predictive uncertainty estimates. VI provides a tractable but approximate posterior via optimization, while HMC offers more accurate sampling at greater computational cost.
\end{enumerate}

All five methods aim to estimate predictive uncertainty through a surrogate function $\sigma:\mathbb{R}^{d} \longrightarrow \mathbb{R}{\ge 0}$, which assigns a raw uncertainty score to each new input $x_{\rm new}$ based on model-specific heuristics. While these scores yield empirical uncertainty bands, they generally lack formal coverage guarantees~\cite{mousavi2017heuristics}. Within our framework, such raw scores serve as inputs to the conformal prediction procedure, which converts them into calibrated prediction intervals. The construction of each baseline is detailed below.

\subsubsection{Distance-Based Uncertainty Estimation}
\label{sec:sub:distance}

We emphasize that the deterministic forward pass of the neural network yields the predictive mean; hence the subsequent discussion focuses on the strategies for constructing meaningful uncertainty estimates.

\paragraph{Geometric Distance (GD)}
The geometric distance is a widely used method for estimating uncertainty based on geometric complexity in the input space. Given a training dataset $\mathcal D_{\mathrm{data}} $, we define the uncertainty at a test point $x_{\rm new}$ using its distance to nearby training samples.

Specifically, let $\mathcal{N}_{K}(x_{\rm new}) \subset \mathcal D_{\mathrm{data}}$ denote the set of the $K$ nearest neighbors of $x_{\rm new}$ under the Euclidean norm. Alternative distance metrics may be employed depending on the geometry of the input space. The GD-based uncertainty estimate is then defined as the average distance between $x_{\rm new}$ and its $K$ nearest neighbors:
\begin{equation}
\sigma_{\textrm {GD}}^2(x_{\rm new})
:= \frac{1}{K} \sum_{x_k \in \mathcal N_{K}(x_{\rm new})} \lVert x_{\rm new} - x_k \rVert_2^2. 
\label{eq:fd_hu}
\end{equation}
By construction, $\sigma_{\textrm{GD}}(x_{\rm new})$ decreases in densely sampled areas and increases in regions far from training data, thus providing a simple proxy for uncertainty caused by data sparsity.

\paragraph{Latent-space Distance (LD)} 
To improve the geometric fidelity of distance-based uncertainty estimates, we evaluate distances in the latent space of the network, represented by the penultimate layer mapping $h_\theta$.

Recall the definition of $\mathcal{N}_{K}(x_{\rm new}) \subset \mathcal D_{\mathrm{data}}$ denote the set of the $K$ nearest neighbors of $x_{\rm new}$ under the Euclidean norm. The LD-based uncertainty is then defined as:
\begin{equation}
\sigma_{\textrm{LD}}^2(x_{\rm new}) := \frac{1}{K} \sum_{x_k \in \mathcal N_{\mathrm{k}}(x_{\rm new})} \big\lVert h_{\theta}(x_{\rm new}) - h_{\theta}(x_k) \big\rVert_2^2. 
\label{eq:ld_hu}
\end{equation}
The uncertainty score $\sigma_{\textrm{LD}}(x_{\rm new})$ reflects the local density and benefits from the expressiveness of the learned Geometric space. However, in high-dimensional latent spaces, Euclidean distances become less informative due to the “curse of dimensionality”, and alternative metrics or dimensionality reduction techniques (e.g., PCA~\cite{abdi2010principal}, t-SNE~\cite{maaten2008visualizing}) may be required to restore discriminative power.


\subsubsection{Monte Carlo (MC) Dropout (DO)}
\label{sec:sub:mc_do}

MC-DO offers a scalable approximation to Bayesian inference in neural networks~\cite{gal_dropout_2016}. By interpreting dropout as stochasticity in the weights, each forward pass corresponds to a sample from a variational posterior. Retaining dropout at inference enables approximate posterior sampling and uncertainty estimation via Monte Carlo statistics.

Let $\{m_{\mathrm{DO}}^{(n)}\}_{n=1}^{N_{\mathrm{MC}}}$ denote a collection of $N_{\mathrm{MC}}$ independent dropout masks sampled during inference. For a test sample $x_{\mathrm{new}}$, each stochastic forward pass yields a realization
\[
u_\theta^{(n)}(x_{\mathrm{new}}) := f_{\theta, m_{\mathrm{DO}}^{(n)}}(x_{\mathrm{new}}),
\]
where $f_{\theta, m}$ denotes the network output with parameters $\theta$ and dropout mask $m$. The empirical mean and predictive variance are then estimated by:
\begin{equation}
\mu_{\mathrm{DO}}(x_{\rm new}) := \frac{1}{N_{\mathrm{MC}}} \sum_{n=1}^{N_{\mathrm{MC}}} u_\theta^{(n)}(x_{\rm new}), 
\quad
{\sigma}_{\mathrm{DO}}^2(x_{\rm new}) := \frac{1}{N_{\mathrm{MC}}} \sum_{n=1}^{N_{\mathrm{MC}}} \left\| u_\theta^{(n)}(x_{\rm new}) - {\mu}_{\mathrm{DO}}(x_{\rm new}) \right\|_2^2.
\label{eq:do_mean_var}
\end{equation}
Here, ${\sigma}_{\mathrm{DO}}(x_{\rm test})$ serves as a pointwise estimate of uncertainty associated with the prediction at $x_{\rm test}$. This method offers a computationally efficient alternative to full Bayesian inference, while still capturing model uncertainty induced by limited training data or structural mismatch.

\subsubsection{Bayesian PINNs (B-PINNs)}
\label{sec:sub:bayesian_posterior_sampling}

In the Bayesian framework~\cite{bernardo1994bayesian}, epistemic uncertainty is captured by placing a prior distribution $p_0(\theta)$ over $\theta$ and updating it in light of observed data using Bayes’ theorem~\cite{berrar2018bayes}. When applied to PINNs, this results in the so-called B-PINNs~\cite{yang_b-pinns_2021}, whose posterior distribution is given by
\begin{equation}
p(\theta | \mathcal{D}_{\mathrm{data}}) = \frac{p(\mathcal{D}_{\mathrm{data}} | \theta)\, p_0(\theta)}{\int p(\mathcal{D}_{\mathrm{data}} | \theta)\, p_0(\theta)\, \mathrm{d}\theta},
\label{eq:posterior_exact}
\end{equation}
where $p(\mathcal{D}_{\mathrm{data}}|\theta)$ is the likelihood function and the denominator represents the model evidence.

The posterior in~\eqref{eq:posterior_exact} is generally intractable due to the high-dimensional integral in the denominator. Consequently, approximate inference methods are required. In this work, we consider two widely used techniques: Variational Inference and Hamiltonian Monte Carlo, described below.

\paragraph{Variational Inference (VI)}
\label{par:bayesian_posterior_sampling}

VI approximates the intractable posterior with a tractable family of distributions by solving an optimization problem. In this work, we assume a fully factorized Gaussian approximation,
\begin{equation}
  q_{\phi}(\theta) =
  \prod_{j=1}^{d_\theta}
  \mathcal N\!\bigl(\theta_j | \mu_j, \sigma_j^{2}\bigr),
  \qquad
  \sigma_j = \operatorname{softplus}(\rho_j),
  \label{eq:mf_gaussian}
\end{equation}
where $d_\theta$ is the parameter dimension and $\phi=\{(\mu_j,\rho_j)\}_{j=1}^{d_\theta}$ are the variational parameters. The softplus reparameterization ensures strictly positive standard deviations~\cite{blundell_weight_2015}. The variational parameters are obtained by minimizing the negative evidence lower bound (ELBO),
\begin{align}
  \min_{\phi}\ - \mathcal L_{\mathrm{ELBO}}(\phi) 
  &:= -
     \underbrace{\mathbb E_{q_{\phi}}
       \!\bigl[\log p(\mathcal D_{\mathrm{data}}|\theta)\bigr]}_{\text{expected log-likelihood}}
     \;+\;
     \underbrace{\mathrm{KL}\bigl(q_{\phi}(\theta)\,||\,p_{0}(\theta)\bigr)}_{\text{complexity penalty}},
  \label{eq:elbo}
\end{align}
where the first term encourages data fidelity while the KL term regularizes the solution towards the prior. Once trained, the surrogate posterior $q_{\phi}(\theta)$ enables efficient sampling: we draw $M$ parameter samples $\{\theta_i\}_{i=1}^M$ to approximate the predictive distribution. Details of training and inference are provided in~\ref{sec:apd:vi}.

\paragraph{Hamiltonian Monte Carlo (HMC)}
\label{sec:sub:hmc}
Different from VI, HMC approximate the true posterior distribution by directly sampling $\theta$ from a surrogate system (Hamiltonian system) with the Metropolis-Hasting Algorithm~\cite{neal_mcmc_2012, betancourt2017conceptual}. It first construct a Hamiltonian system:
\begin{equation}
\label{eq:hamiltonian_full}
H(\theta,r)=U(\theta)+V(r),
\quad 
\text{where}
\quad
\begin{aligned}
    U(\theta)
  &:= -\log p(\mathcal D_{\mathrm{data}}|\theta) - \log p_0(\theta) + \text{const} \\
  V(r)
  &:= \tfrac{1}{2}\, r^{\mathsf T} \mathbb{M}^{-1} r
\end{aligned},
\end{equation}
where the potential energy $U(\theta)$ encode the parameters' posterior formulation~\eqref{eq:posterior_exact} in the potential energy function and $V(r)$ is the fictional kinetic component, in which \( r \in \mathbb{R}^{d_\theta} \) is the momentum variable, and \( \mathbb{M} \in \mathbb{R}^{d_\theta \times d_\theta} \) is the mass matrix. The parameters are sampled from a Hamiltonian dynamics, which will be given in details in~\ref{sec:apd:hmc}. With the drawn samples we form the parameter samples set denoted by $\Theta_{\mathrm{HMC}}=\{\theta_i\}_{i=0}^M$.

During prediction, given a query point $x_{\mathrm{new}}$, the B-PINNs compute the predictive mean and variance as
\begin{equation}
    \begin{aligned}
    \mu_{\mathrm{BAY}}(x_{\mathrm{new}})
      &= \frac{1}{M}\sum_{m=1}^{M} f_{\theta^{(m)}}(x_{\mathrm{new}}), \\
    \sigma^2_{\mathrm{BAY}}(x_{\mathrm{new}})
      &= \frac{1}{M}\sum_{m=1}^{M}
         \bigl\| f_{\theta^{(m)}}(x_{\mathrm{new}})
         - \mu_{\mathrm{BAY}}(x_{\mathrm{new}}) \bigr\|^2_2,
    \end{aligned}
    \label{eq:vimc_mv}
\end{equation}
where the parameter samples $\theta^{(m)}$ are drawn according to the chosen inference method:
\[
\theta^{(m)} \sim 
\begin{cases}
  q_{\phi}(\theta), & \text{Variational Inference (VI)} \\
  \Theta_{\mathrm{HMC}}, & \text{Hamiltonian Monte Carlo (HMC)}.
\end{cases}
\]

\subsection{Conformal Prediction}
\label{sec:sub:cp}

Conformal prediction (CP) provides distribution-free prediction intervals with guaranteed finite-sample coverage under minimal assumptions~\cite{shafer2008tutorial, angelopoulos2023conformal}. 
It requires an additional labeled calibration dataset 
$\mathcal{D}_{\mathrm{cal}}=\{(x_i, u_i)\}_{i=1}^{N_{\rm c}}$ 
that is independent of the training set.

\paragraph{Vanilla CP}
In its basic form, CP calibrates model predictions by constructing nonconformity scores on $\mathcal{D}_{\mathrm{cal}}$, thereby yielding prediction intervals with statistically valid coverage guarantees. 

With a trained deterministic predictor 
$u_{\theta}:\mathcal X \to \mathbb R$ (e.g., the PINN considered here), 
we define nonconformity scores on the calibration set $\mathcal D_{\mathrm{cal}}$. 
For each calibration pair, the score is taken as the absolute residual,
\begin{equation}
  r_i = \bigl|u_i - u_{\theta}(x_i)\bigr|,
  \qquad
  R = \{r_i\}_{i=1}^{N_{\mathrm c}}.
  \label{eq:score_vanilla}
\end{equation}

Let $q_{1-\alpha}$ denote the 
$\lceil (1-\alpha)(N_{\mathrm c}+1)\rceil$-th smallest element of $R$. 
Then, for a new input $x_{\mathrm{new}}$, the vanilla $(1-\alpha)$ conformal 
prediction interval is
\begin{equation}
  I_{1-\alpha}(x_{\mathrm{new}})
  =
  \Bigl[
      u_{\theta}(x_{\mathrm{new}}) - q_{1-\alpha},\;
      u_{\theta}(x_{\mathrm{new}}) + q_{1-\alpha}
  \Bigr],
  \label{eq:vanilla_cp_interval}
\end{equation}
which by construction guarantees the finite-sample coverage (cf.~Theorem~\ref{thm:cp}, see also~\cite{angelopoulos_gentle_2022})
\[
  \mathbb{P}\bigl\{u_{\mathrm{new}}\in I_{1-\alpha}(x_{\mathrm{new}})\bigr\}
  \ge 1-\alpha.
\]

\paragraph{CP}

Vanilla CP computes absolute-residual scores $r_i=\lvert u_i - u_\theta(x_i)\rvert$, thereby ignoring heteroskedasticity and reducing robustness~\cite{hu2022robust}. CP mitigates this issue by \emph{normalizing} residuals with a positive scale estimate, 
so that interval widths adapt to varying noise levels while preserving finite-sample validity. 
Achieving full local adaptivity would further require $x$-dependent quantiles, 
as in our local CP extension (see Section~\ref{sec:extension}).

Concretely, CP employs a scale function 
$\sigma:\mathcal{X}\to\mathbb{R}_{>0}$, typically provided by the baseline uncertainty model, 
to normalize residuals. For each calibration pair $(x_i,u_i)\in\mathcal{D}_{\mathrm{cal}}$, the scaled nonconformity score is
\begin{equation}
  s_i = \frac{\lvert u_i - u_\theta(x_i)\rvert}{\sigma(x_i)},
  \qquad
  S = \{s_i\}_{i=1}^{N_{\mathrm c}}.
  \label{eq:score_scaled}
\end{equation}

Let $q^{\mathrm{cp}}_{1-\alpha}$ denote the 
$\lceil (1-\alpha)(N_{\mathrm c}+1)\rceil$-th smallest element of $S$. 
The $(1-\alpha)$ CP interval at a new input $x_{\mathrm{new}}$ is then
\begin{equation}
  I^{\mathrm{cp}}_{1-\alpha}(x_{\mathrm{new}})
  =
  \Bigl[
    u_\theta(x_{\mathrm{new}}) - q^{\mathrm{cp}}_{1-\alpha}\sigma(x_{\mathrm{new}}),\;
    u_\theta(x_{\mathrm{new}}) + q^{\mathrm{cp}}_{1-\alpha}\sigma(x_{\mathrm{new}})
  \Bigr],
  \label{eq:scaled_cp_interval}
\end{equation}
which retains the finite-sample, distribution-free coverage guarantee of vanilla CP, 
while adapting interval widths to heteroskedasticity compared to~\eqref{eq:vanilla_cp_interval}. The following Theorem establishes the theoretical foundation of CP. Unless otherwise noted, CP serves as our default calibration method for heuristic uncertainties throughout this work.

\begin{theorem}[{\cite[Theorem~2]{romano2019conformal}}]
\label{thm:cp}
Let $\{(x_i,u_i)\}_{i=1}^{n} \cup (x_{\rm new}, u_{\rm new})$ be an exchangeable sequence drawn i.i.d.\ from the data distribution. 
Then the interval $I^{\mathrm{cp}}_{1-\alpha}(x_{\rm new})$ defined by~\eqref{eq:scaled_cp_interval} satisfies
\[
  \mathbb{P}\bigl(u_{\rm new}\in I^{\mathrm{cp}}_{1-\alpha}(x_{\rm new})\bigr)\ge1-\alpha.
\]
Moreover, if the scaled scores $|u-u_\theta(x)|/\sigma(x)$ are continuous, then
\[
  \mathbb{P}\bigl(u_{\rm new}\in I^{\mathrm{cp}}_{1-\alpha}(x_{\rm new})\bigr)\le1-\alpha+\tfrac{1}{n+1}.
\]
\end{theorem}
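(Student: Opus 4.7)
The overall strategy is to reduce the coverage event to a rank statement about exchangeable random variables, exploiting the fact that the predictor $u_\theta$ and the scale function $\sigma$ are trained on data independent of the calibration set and are treated as fixed deterministic functions throughout the argument. This decoupling is what allows exchangeability to propagate from the raw data to the nonconformity scores.

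First, I would introduce the augmented nonconformity score for the test point, $s_{n+1} := |u_{\rm new} - u_\theta(x_{\rm new})|/\sigma(x_{\rm new})$, alongside the calibration scores $\{s_i\}_{i=1}^{n}$ from \eqref{eq:score_scaled}. A direct rearrangement of the interval definition \eqref{eq:scaled_cp_interval} shows that the coverage event $\{u_{\rm new}\in I^{\mathrm{cp}}_{1-\alpha}(x_{\rm new})\}$ coincides with $\{s_{n+1}\le q^{\mathrm{cp}}_{1-\alpha}\}$, where $q^{\mathrm{cp}}_{1-\alpha}$ is the $\lceil(1-\alpha)(n+1)\rceil$-th order statistic of $\{s_1,\dots,s_n\}$. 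The next step is to lift this into a statement about ranks inside the augmented sample $\{s_1,\dots,s_{n+1}\}$: up to ties, $s_{n+1}\le q^{\mathrm{cp}}_{1-\alpha}$ is equivalent to the rank of $s_{n+1}$ in the augmented sample being at most $\lceil(1-\alpha)(n+1)\rceil$.

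With this reformulation in hand, I would invoke exchangeability: since $(x_i,u_i)_{i=1}^{n+1}$ is exchangeable and the map $(x,u)\mapsto |u-u_\theta(x)|/\sigma(x)$ is deterministic, the scores $(s_1,\dots,s_{n+1})$ are themselves exchangeable. Consequently, with a symmetric tie-breaking rule, the rank of $s_{n+1}$ is uniformly distributed on $\{1,2,\dots,n+1\}$, which yields
\[
\mathbb{P}\bigl(s_{n+1}\le q^{\mathrm{cp}}_{1-\alpha}\bigr)
\;\ge\; \frac{\lceil (1-\alpha)(n+1)\rceil}{n+1}
\;\ge\; 1-\alpha,
\]
establishing the lower bound. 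For the matching upper bound, the continuity assumption is decisive: when the scaled scores have a continuous distribution, ties occur with probability zero, so the uniform-rank statement becomes an equality and the elementary inequality $\lceil(1-\alpha)(n+1)\rceil \le (1-\alpha)(n+1)+1$ delivers $1-\alpha+1/(n+1)$.

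The main obstacle I anticipate is the careful bookkeeping between two quantiles: the one defined over the $n$-element calibration set (which determines $q^{\mathrm{cp}}_{1-\alpha}$) versus the rank position inside the $(n+1)$-element augmented set (which is what exchangeability controls). Making the off-by-one correspondence precise, and correctly handling the possibility of ties without destroying the uniform-rank distribution, is the step that requires the most care; everything else is a standard application of symmetry together with the definition of the empirical quantile.
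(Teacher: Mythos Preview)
Your proposal is correct and is exactly the standard exchangeability/rank argument. The paper does not actually prove Theorem~\ref{thm:cp}---it simply cites \cite[Theorem~2]{romano2019conformal}---but the same reasoning you outline (fix the predictor and scale function, pass exchangeability to the scores, compare $s_{n+1}$ to the $\lceil(1-\alpha)(n+1)\rceil$-th order statistic, then use the uniform-rank distribution with continuity killing ties) is precisely what the paper invokes as ``the standard conformal rank argument'' in its proof sketch of Theorem~\ref{thm:local_cp}.
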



\subsection{Evaluation Metrics}
\label{sec:metrics}

We evaluate predictive uncertainty using two complementary metrics: \emph{empirical coverage} and \emph{average coverage deviation (ACD)}. These metrics assess statistical validity both at a target significance level and across a range of levels, providing a comprehensive evaluation of calibration. The effectiveness of CP will be systematically examined through these criteria in Section~\ref{sec:numerics}.

\paragraph{Empirical Coverage}

The empirical coverage $\hat{c}$ is defined as the proportion of ground-truth targets that fall within their corresponding prediction intervals. 
Let $\mathcal{D}_{\mathrm{test}}=\{(x_i,u_i)\}_{i=1}^{N_{\rm test}}$ denote the held-out test set, and let $[L_i,U_i]$ be the predicted interval at expected coverage level $(1-\alpha)$ for input $x_i$. 
Then $\hat{c}$ is given by
\begin{equation}
\hat{c} := \frac{1}{N_{\rm test}} \sum_{i=1}^{N_{\rm test}} \mathbf{1} \left\{ L_i \le u_i \le U_i \right\},
\label{eq:coverage}
\end{equation}
where $\mathbf{1}{\cdot}$ is the indicator function. A well-calibrated model should yield $\hat{c}$ close to the expected target level $1-\alpha$. For instance, when $\alpha=0.05$, the desired coverage is $0.95$, and a model is regarded as well calibrated at this level if its empirical coverage $\hat{c}$ is close to $0.95$. Deviations from the target level indicate miscalibration: if $\hat{c}<1-\alpha$, the intervals are too narrow and lead to \emph{under-coverage}; if $\hat{c}>1-\alpha$, the intervals are too wide and lead to \emph{over-coverage}. Both scenarios suggest that the model need further calibration.



\paragraph{Average Coverage Deviation (ACD)}

To evaluate calibration across multiple significance levels, we use the \emph{average coverage deviation}, which aggregates the discrepancy between empirical and expected coverage over a range of significance levels. Let $\{\alpha_k\}_{k=1}^K \subset (0,1)$ be a set of miscoverage levels. For each $\alpha_k$, the expected coverage is $1-\alpha_k$, and the empirical coverage $\hat{c}_k$ is computed via Eq.~\eqref{eq:coverage}. The ACD is then defined as
\begin{equation}
\text{ACD} := \frac{1}{K} \sum_{k=1}^{K} \big| \hat{c}_k - (1 - \alpha_k) \big|.
\label{eq:acd}
\end{equation}
Lower values indicate better calibration uniformly across the grid of significance levels. 
An ideal model would achieve $\mathrm{ACD}=0$, corresponding to exact agreement between empirical and expected coverage. 
In practice, $\mathrm{ACD}$ can be interpreted as the average absolute gap (in coverage probability) between what the model achieves and what it targets across different significance levels, thus providing a single scalar summary of calibration quality over the entire range of $\alpha$.

\section{Numerical Experiments}
\label{sec:numerics}

In this section, we evaluate the effectiveness of CP for calibrating prediction intervals in PINNs. Three benchmark PDEs are considered: (a) the 1D Poisson equation (Section~\ref{sec:1dp}), (b) the 2D Allen--Cahn equation (Section~\ref{sec:ac2d}), and (c) the 3D Helmholtz equation (Section~\ref{sec:3dh}). For each case, we compare five heuristic UQ methods, both before and after CP calibration: (i) geometric distance, (ii) latent distance, (iii) dropout, (iv) variational-inference (VI), and (v) Hamiltonian Monte Carlo (HMC).  


For the 1D Poisson problem, we adopt a four-layer multilayer perceptron with hidden widths $[25,\, 35,\, 35,\, 25]$. 
The network is expanded to $[16,\, 32,\, 64,\, 64,\, 64,\, 32,\, 16]$ for the 2D Allen--Cahn benchmarks, and for the more challenging 3D Helmholtz equation, we use a deeper and wider network with widths 
$[32,\,64,\,128,\,128,\,128,\,64,\,32]$.
The hidden layers are activated by $\tanh$, and the network weights are initialized according to the Xavier scheme~\cite{glorot2010understanding}, which is used throughout our experiments to maintain consistency and stable optimization. Training is performed using the Adam optimizer, followed by a step-wise learning rate scheduling for convergence. 
The detailed hyperparameter settings, including learning rate schedules, training epochs, and loss weights, are summarized in Table~\ref{tab:training_parameter} in~\ref{sec:apd:num}. 


All experiments are implemented using the \texttt{PyTorch} deep learning framework. Training and evaluation are carried out on a laptop equipped with an Apple M4 Pro processor and 24\,GB of memory. The full source code, including PDE solvers, data generation scripts, and CP calibration routines, is publicly available at our \url{https://github.com/RoyYu0509/LocalCP4PINN}.





\subsection{1D Poisson Equation}
\label{sec:1dp}

We begin with a simple 1D Poisson problem to illustrate the impact of CP on uncertainty calibration:
\[
u''(x) = f(x), \quad x \in [0,1], \qquad u(0) = u(1) = 0,
\]
with $f(x) = -\pi^{2} \sin(\pi x)$ and exact solution $u^*(x) = \sin(\pi x)$. 
B-PINNs (cf.~Section~\ref{sec:sub:bayesian_posterior_sampling}) are implemented using VI, though other heuristic approaches would be expected to behave similarly. 

We generate $60$ training samples by drawing inputs uniformly at random from the domain and evaluating the corresponding targets from the exact solution. An additional $30$ samples are reserved for calibration using the same procedure. Independent Gaussian noise with zero mean and standard deviation $\sigma=0.15$ is added to both sets to mimic measurement error. The method remains valid under any exchangeable sampling scheme. In addition, $200$ uniformly spaced collocation points are placed in the interior of the domain, with Dirichlet boundary conditions enforced at the endpoints.

\begin{figure}[h]
  \centering
  \includegraphics[width=0.95\linewidth]{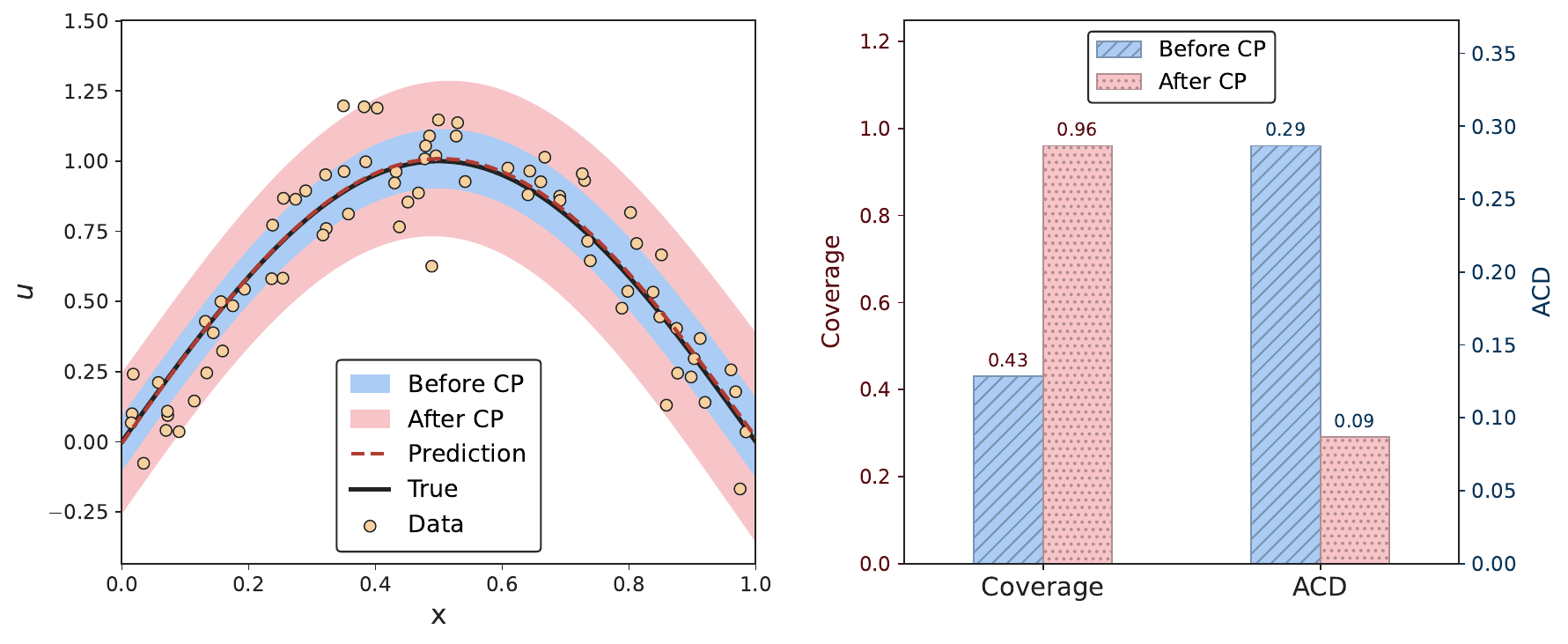}
\caption{
Variational Inference B-PINN uncertainty calibration on the 1D Poisson problem at $\alpha=0.05$. 
\textbf{Left:} Prediction intervals before (red) and after (blue) CP. 
\textbf{Right:} Comparison of empirical coverage and ACD. ACD is computed over 19 equally spaced $\alpha_k \in [0.05, 0.95]$ on the test dataset.}
  \label{fig:1Ddemo}
\end{figure}

Figure~\ref{fig:1Ddemo} reports results at the expected confidence level $1-\alpha = 0.95$. The left panel shows prediction intervals before and after calibration. The naïve B-PINN intervals substantially under-cover the truth, reflecting over-confident uncertainty estimates. After applying CP, the intervals achieve near-expected coverage, while this improvement is accompanied by expanded intervals (less informative), the trade-off is consistent with the need to correct the systematic miscalibration. Crucially, this improvement is obtained {\it post hoc}, without retraining the surrogate model. The right panel summarizes additional evaluation metrics, all of which confirm the benefit of CP: empirical coverage closely matches the expected coverage level and average coverage deviation is reduced. This toy example therefore demonstrates that CP can reliably transform unreliable uncertainty estimates into calibrated prediction intervals.

\subsection{2D Allen--Cahn Equation}
\label{sec:ac2d}

We next examine the steady Allen--Cahn equation, following~\cite{yang_b-pinns_2021}, posed on the square domain $\Omega = (-1,1)\times(-1,1)$ with Dirichlet boundary conditions prescribed by the exact solution:
\begin{align}
  \lambda\,\Delta u(x,y) + u(x,y)\big(u(x,y)^2 - 1\big) &= f(x,y), && (x,y)\in\Omega, \\
  u(x,y) &= u^*(x,y), && (x,y)\in\partial\Omega ,
\end{align}
where $\lambda=0.05$, $u^*(x,y)=\sin(\pi x)\sin(\pi y)$, and the forcing term $f$ is obtained analytically by substituting $u^*$ into the PDE.
We generate 500 i.i.d. synthetic observations (with either Latin hypercube sampling~\cite{helton2003latin} or i.i.d. uniform; we use i.i.d. uniform in the reported runs), partitioned into 300 training, 100 calibration, and 100 testing samples. Measurement noise is introduced in the same manner as described in Section~\ref{sec:1dp}, but with $\sigma = 0.05$. $1{,}024$ collocation points and $800$ boundary points are placed evenly in the interior of the domain and on the boundaries to enforce physics. The dense boundary points allocation is used to ensure accurate satisfaction of boundary constraints, which are critical for the overall solution quality.

\begin{figure}[h]
  \centering
  \includegraphics[width=0.95\linewidth]{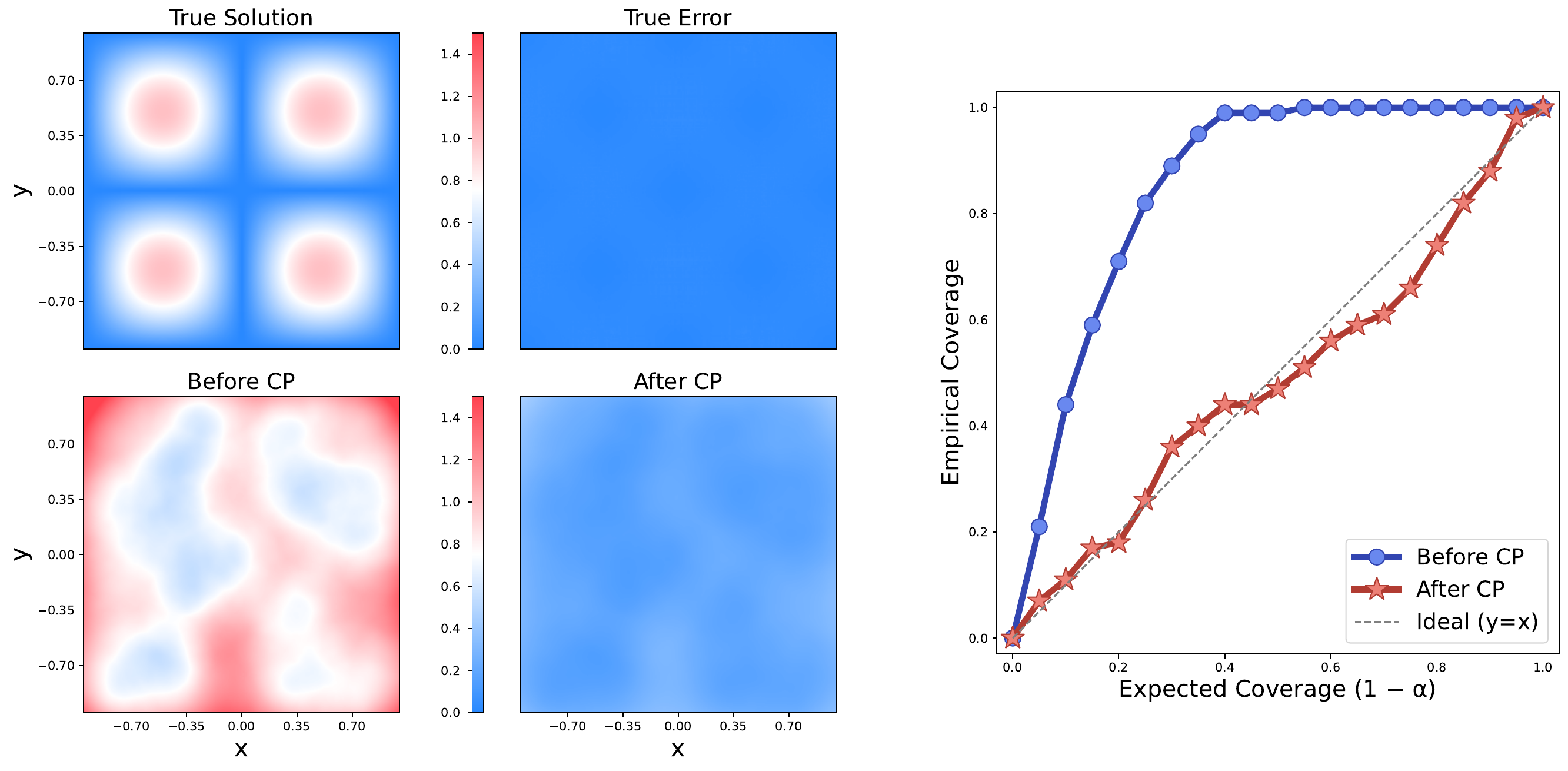}
  \caption{Geometric-distance PINN uncertainty calibration for the 2D Allen–Cahn equation. \textbf{Left:} True solution in absolute value, surrogate prediction, and error distributions before and after CP at $\alpha=0.05$. \textbf{Right:} Empirical coverage versus expected coverage across varying $\alpha$. }
  \label{fig:ac_fd_model}
\end{figure}

\begin{figure}[h]
  \centering
  \includegraphics[width=1.0\linewidth]{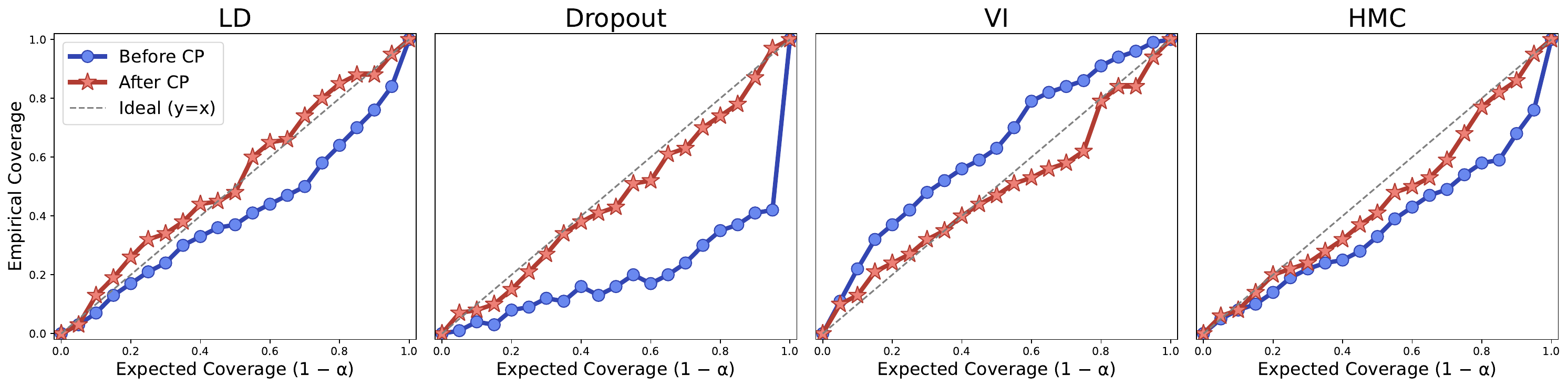}
\caption{Empirical coverage plots for latent-distance PINN, dropout PINN, variational inference B-PINN, and Hamiltonian Monte Carlo B-PINN (left to right).}
  \label{fig:cov_plots}
\end{figure}

Figure~\ref{fig:ac_fd_model} compares the geometric-distance heuristic UQ method (uncalibrated) with its CP-calibrated counterpart. The uncalibrated intervals are overly conservative, while CP significantly sharpens the bands and restore reliable calibration across different significance levels. Figure~\ref{fig:cov_plots} further reports empirical coverage plots for latent-distance, dropout, variational-inference, and Hamiltonian Monte Carlo. Across all cases, CP consistently corrects systematic miscalibration of the heuristic methods, yielding well-calibrated uncertainty intervals—consistent with the improvements observed in the 1D Poisson example.

Table~\ref{tab:allen2d_metrics} reports the performance of different UQ methods on the 2D Allen–Cahn problem at the expected coverage level $1-\alpha=0.95$. Across all methods, the raw models (before CP) display clear miscalibration, with empirical coverage either exceeding or falling short of the target level. After applying CP, the empirical coverage aligns closely with the expected value, and the ACD is significantly reduced. Overall, the results confirm that CP provides systematic and robust calibration for various heuristic UQ methods.

\begin{table}[t]
  \captionsetup{skip=8pt}
  \centering
\caption{Performance metrics for the 2D Allen--Cahn problem at expected coverage level $1-\alpha=0.95$. The ACD is computed over 19 equally spaced $\alpha_k \in [0.05, 0.95]$ on the test dataset.}
  \label{tab:allen2d_metrics}
  \small
  \begin{tabular}{lcccc}
    \toprule
    \multirow{2}{*}{\textbf{Type}} & \multirow{2}{*}{\textbf{Model}} &
    \multicolumn{2}{c}{\textbf{Coverage}} &
    \multirow{2}{*}{\textbf{ACD}} \\
    \cmidrule(lr){3-4}
    & & \textbf{Expected} & \textbf{Empirical} & \\
    \midrule
    \multirow{2}{*}{GD} 
      & Before CP & 0.95 & 1.00  & 0.3395 \\
      & After CP & 0.95 & \textbf{0.96}  & \textbf{0.0329} \\
    \midrule
    \multirow{2}{*}{LD} 
      & Before CP & 0.95 & 0.84 & 0.0929 \\
      & After CP & 0.95 & \textbf{0.95} & \textbf{0.0310} \\
    \midrule
    \multirow{2}{*}{Dropout} 
      & Before CP & 0.95 & 0.42 & 0.2814 \\
      & After CP & 0.95 & \textbf{0.97} & \textbf{0.0386} \\
    \midrule
    \multirow{2}{*}{VI} 
      & Before CP & 0.95 & 0.99 & 0.1205  \\
      & After CP & 0.95 & \textbf{0.94} & \textbf{0.0381}  \\
    \midrule
    \multirow{2}{*}{HMC} 
      & Before CP & 0.95 & 0.76 & 0.1281  \\
      & After CP & 0.95 & \textbf{0.95} & \textbf{0.0486}  \\
    \bottomrule
  \end{tabular}
\end{table}



\subsection{3D Helmholtz Equation}
\label{sec:3dh}

We consider the 3D Helmholtz equation on the unit cube $\Omega=(0,1)^3$ with homogeneous Dirichlet boundary conditions:
\begin{align}
\Delta u(x,y,z) + k^2 u(x,y,z) &= f(x,y,z), && (x,y,z)\in\Omega, \\
u(x,y,z) &= 0, && (x,y,z)\in\partial\Omega,
\end{align}
where the wavenumber is set to $k=\pi$ and the analytical solution $u^*(x,y,z)=\sin(\pi x)\sin(\pi y)\sin(\pi z)$
yields the forcing term $f(x,y,z) = -2\pi^2 \sin(\pi x)\sin(\pi y)\sin(\pi z)$.
In this example, we adopt the same sampling procedure, but double the sample size from the 2D case, i.e., $600$ for training, $200$ for calibration, and $200$ for testing. We increase the number of interior collocation points to $8{,}000$, while allocate $6{,}144$ boundary points, $1024$ per boundary face.

In this example, we evaluate the performance of three heuristic UQ methods (GD, LD, and Dropout) against their CP-calibrated counterparts. Bayesian variants (VI and HMC) are excluded due to instability in high-dimensional training~\cite{zong2025randomized}. Figure~\ref{fig:helm3d:models:cov} shows that CP consistently corrects both under- and over-coverage, aligning empirical coverage with the ideal $y=1-\alpha$ line across all significance levels. This demonstrates CP’s strong post hoc calibration effect even in higher-dimensional settings.

\begin{figure}[h]
\centering
\includegraphics[width=1.0\linewidth]{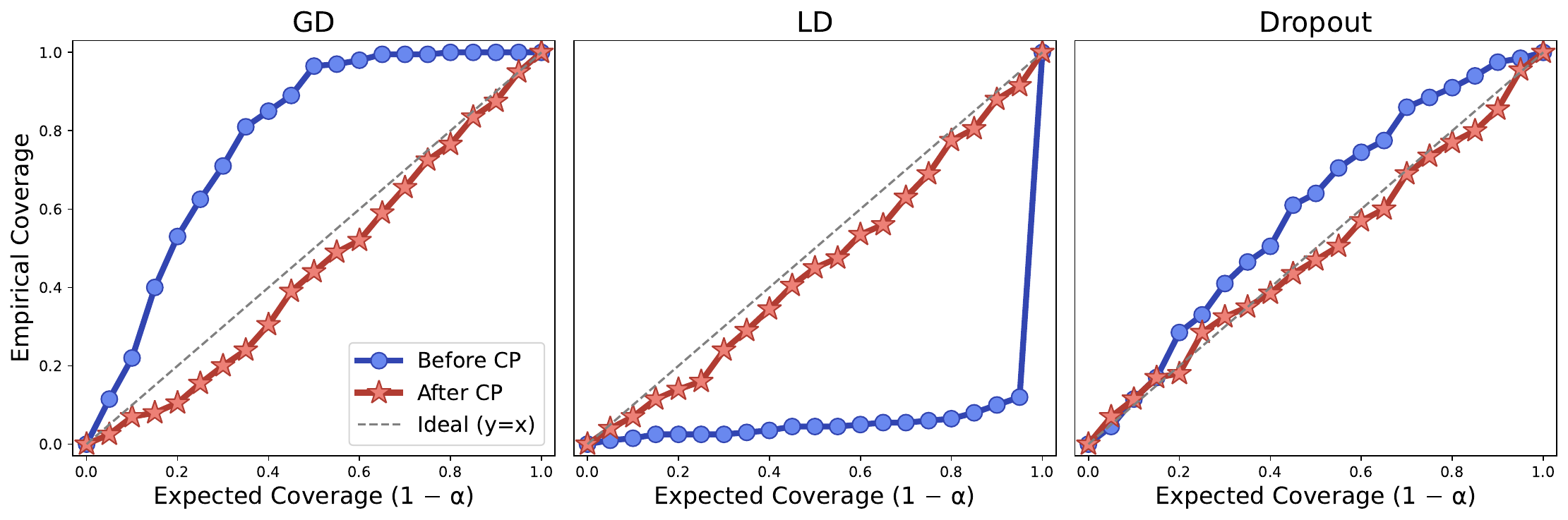}
\caption{Empirical coverage plots for the 3D Helmholtz equation using geometric-distance PINN, latent-distance PINN, and dropout PINN (left to right).}
\label{fig:helm3d:models:cov}
\end{figure}

Table~\ref{tab:helm3d:model:metrics} summarizes performance at the expected coverage level $1-\alpha=0.95$, consistent with the trends observed in Figure~\ref{fig:helm3d:models:cov}. For instance, the latent-distance model achieves only $12\%$ empirical coverage before calibration, indicating a near-complete failure of its uncertainty estimates. After applying CP, coverage improves to $92\%$ and the ACD drops from $0.4090$ to $0.0467$. 
Even from nearly collapsed baselines (LD in Figure~\ref{fig:helm3d:models:cov}), CP can restores valid coverage across all significance levels without retraining, underscoring its robustness. 

\begin{table}[htbp]
  \captionsetup{skip=8pt}
  \centering
\caption{Performance metrics for the 3D Helmholtz problem at expected coverage level $1-\alpha=0.95$. The ACD is computed over 19 equally spaced $\alpha_k \in [0.05, 0.95]$ on the test dataset.}
  \label{tab:helm3d:model:metrics}
  \small
  \begin{tabular}{lcccc}
    \toprule
    \multirow{2}{*}{\textbf{Type}} & \multirow{2}{*}{\textbf{Model}} &
    \multicolumn{2}{c}{\textbf{Coverage}} &
    \multirow{2}{*}{\textbf{ACD}} \\
    \cmidrule(lr){3-4}
    & & \textbf{Expected} & \textbf{Empirical} & \\
    \midrule
    \multirow{2}{*}{GD}
      & Before CP & 0.95 & 1.00  & 0.2643 \\
      & After CP & 0.95 & \textbf{0.95}  & \textbf{0.0517} \\
    \midrule
    \multirow{2}{*}{LD}
      & Before CP & 0.95 & 0.12  & 0.4090 \\
      & After CP & 0.95 & \textbf{0.92} & \textbf{0.0467} \\
    \midrule
    \multirow{2}{*}{Dropout}
      & Before CP & 0.95 & 0.99  & 0.0888 \\
      & After CP & 0.95 & \textbf{0.95} & \textbf{0.0226} \\
    \bottomrule
  \end{tabular}
\end{table}



\section{Extension}
\label{sec:extension}

In this section, we extend the CP--PINN framework to incorporate local adaptivity in UQ for PDEs. When noise or model error exhibits strong spatial heterogeneity~\cite{li2019spatial}, a single global scaling factor $q$ (cf.~Section~\ref{sec:sub:cp}) may yield intervals that are overly conservative in some regions and under-confident in others. To overcome this limitation, we propose a local CP method (Section~\ref{sec:sub:local_cp}) with an efficient algorithmic implementation (Algorithm~\ref{alg:local_q_scaled_cp}). The approach requires no extra data or model retraining, preserves finite-sample coverage (Theorem~\ref{thm:local_cp}), and achieves pointwise adaptivity by adjusting interval widths to local uncertainty patterns. Its effectiveness is demonstrated through numerical experiments in Section~\ref{sec:sub:local_numerics}.


\subsection{Local Conformal Prediction (Local CP)}
\label{sec:sub:local_cp}

Recall the CP setting from Section~\ref{sec:sub:cp}. Instead of computing residual-based conformity scores solely on the calibration set, we exploit the full training data $\mathcal D_{\mathrm{data}} = \{(x_i,u_i)\}_{i=1}^{N_{\mathrm d}}$,
and define normalized residuals
\[
\mathcal S_{\mathrm{data}} = \Bigg\{ s_i : s_i = \frac{|u_i - u_\theta(x_i)|}{\sigma(x_i)} \Bigg\}_{i=1}^{N_{\mathrm d}},
\]
where $u_\theta$ is the trained PINN surrogate and $\sigma$ is a baseline heuristic estimator (cf~Section~\ref{sec:sub:original_uq}).

The goal is to learn a smooth, input-dependent conditional quantile function $g_\phi$ parameterized by $\phi$, that approximates the $(1-\alpha)$-quantile of the conditional distribution. To achieve this, we minimize the empirical \emph{pinball loss} (a convex surrogate for quantile regression)~\cite{chung2021beyond}:
\begin{equation}
\mathcal L(\phi)
=\frac{1}{N_{\mathrm{d}}}\sum_{i=1}^{N_{\mathrm{d}}}
\rho_{1-\alpha}\bigl(s_i - g_\phi(x_i)\bigr),
\qquad 
\rho_{1-\alpha}(t) = (1-\alpha)\,t_+ + \alpha\,(-t)_+,
\label{eq:pinnball}
\end{equation}
where $t_+=\max(t,0)$ and $(-t)_+=\max(-t,0)$ denote the positive and negative parts, respectively. This loss enforces that $g_\phi(x)$ upper-bounds the residuals $s_i$ at approximately the $(1-\alpha)$ conditional quantile level.

Compared with the global quantile $q^{\mathrm{cp}}_{1-\alpha}$ used in CP, 
the learned function $g_\phi(x)$ provides an $x$-dependent estimator of the conditional quantile. 
Specifically, let $\mathcal D_{\mathrm{cal}}=\{(x_j,u_j)\}_{j=1}^{N_{\mathrm{c}}}$ 
be the calibration dataset, and define the localized calibration scores as
\begin{equation}
\ell_j=\frac{|u_j-u_{\theta}(x_j)|}{g_\phi(x_j)\,\sigma(x_j)},\qquad j=1,\dots,N_{\mathrm{c}}.
\label{eq:local_cali}
\end{equation}
Let $\ell^*$ denote the empirical $(1-\alpha)$–quantile of the calibration scores 
$\{\ell_j\}_{j=1}^{N_{\mathrm{c}}}$, i.e.,
\[
\ell^* := \inf\left\{\, t \in \mathbb{R} : \frac{1}{N_{\mathrm{c}}}\sum_{j=1}^{N_{\mathrm{c}}}\mathbf 1\{\ell_j \le t\} \ge 1-\alpha \right\}.
\]
We then define the local quantile estimator as $q_\phi(x):=\ell^*\cdot g_\phi(x)$,
and obtain the final prediction interval for a new input $x_{\rm new}$
\begin{equation}
I^{\mathrm{scp}}_{1-\alpha}(x_{\rm new})
=\bigl[u_{\theta}(x_{\rm new})-q_\phi(x_{\rm new})\,\sigma(x_{\rm new}),\;\; u_{\theta}(x_{\rm new})+q_\phi(x_{\rm new})\,\sigma(x_{\rm new})\bigr].
\label{eq:local_cp_interval}
\end{equation}

The detailed algorithm is presented in Algorithm~\ref{alg:local_q_scaled_cp}. 
The following Theorem presents the finite-sample coverage guarantee of local CP. For brevity, we provide only a proof sketch, since the full argument involves technical details that fall outside the scope of this paper and will be addressed rigorously elsewhere.

\begin{theorem}[Finite-sample coverage guarantee of local CP]
\label{thm:local_cp}
Fix $\alpha\in(0,1)$. Given independent i.i.d.\ samples $\mathcal D_{\mathrm{data}},\mathcal D_{\mathrm{cal}}$ from $(\mathcal{X},\mathcal{U})$, let $I^{\mathrm{lcp}}_{1-\alpha}$ be the Local-CP interval defined in~\eqref{eq:local_cp_interval}. Then for any independent $(x_{\mathrm{new}},u_{\mathrm{new}})\sim(\mathcal{X},\mathcal{U})$,
\[
\mathbb P\!\left( u_{\mathrm{new}} \in I^{\mathrm{lcp}}_{1-\alpha}(x_{\mathrm{new}}) \right) \ge 1-\alpha .
\]
Moreover, if the calibration scores in \eqref{eq:local_cali} have a continuous distribution, the coverage holds exactly modulo the standard $1/(N_{\mathrm{c}}+1)$ finite-sample correction:
\[
\mathbb P\!\left( u_{\mathrm{new}} \in I^{\mathrm{lscp}}_{1-\alpha}(x_{\mathrm{new}}) \right) 
= \frac{k}{N_{\mathrm{c}}+1}
\in\Bigl[\,1-\alpha,\;1-\alpha+\frac{1}{N_{\mathrm{c}}+1}\,\Bigr),
\]
where $k=\lceil (N_{\mathrm{c}}+1)(1-\alpha)\rceil$.
\end{theorem}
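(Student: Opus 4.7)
The plan is to reduce Local CP to the standard split CP argument by conditioning on the training data and recognizing the calibration scores $\{\ell_j\}_{j=1}^{N_{\mathrm c}}$ together with the corresponding test score $\ell_{\mathrm{new}}$ as an exchangeable collection. I would first condition on $\mathcal D_{\mathrm{data}}$, which freezes the trained predictor $u_\theta$, the baseline scale $\sigma$, and the learned conditional quantile $g_\phi$ into deterministic measurable maps on $\mathcal X$. Under this conditioning, each $\ell_j$ is obtained by applying the same deterministic transformation $h(x,u)=|u-u_\theta(x)|/(g_\phi(x)\sigma(x))$ to the i.i.d.\ pair $(x_j,u_j)$, and the test score $\ell_{\mathrm{new}}:=h(x_{\mathrm{new}},u_{\mathrm{new}})$ is produced in the same way; by the independence of $\mathcal D_{\mathrm{cal}}$ and $(x_{\mathrm{new}},u_{\mathrm{new}})$ from $\mathcal D_{\mathrm{data}}$, the collection $\{\ell_1,\dots,\ell_{N_{\mathrm c}},\ell_{\mathrm{new}}\}$ is i.i.d.\ (hence exchangeable) conditional on $\mathcal D_{\mathrm{data}}$.

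Next I would invoke the standard conformal quantile lemma: for exchangeable real-valued variables $Z_1,\dots,Z_{N_{\mathrm c}},Z_{\mathrm{new}}$ and the $k$-th order statistic $Z_{(k)}$ of the calibration sample, one has $\mathbb P(Z_{\mathrm{new}}\le Z_{(k)})\ge k/(N_{\mathrm c}+1)$. Taking $k=\lceil(N_{\mathrm c}+1)(1-\alpha)\rceil$ identifies $Z_{(k)}$ with $\ell^*$ (after aligning the empirical quantile definition in \eqref{eq:local_cali} with the $k$-th order statistic, which may require the standard $+1$ correction noted in the theorem), yielding $\mathbb P(\ell_{\mathrm{new}}\le\ell^*\mid\mathcal D_{\mathrm{data}})\ge 1-\alpha$. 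The key algebraic step is then the equivalence
\[
\ell_{\mathrm{new}}\le \ell^{*}
\;\Longleftrightarrow\;
|u_{\mathrm{new}}-u_\theta(x_{\mathrm{new}})|\le \ell^{*}\,g_\phi(x_{\mathrm{new}})\,\sigma(x_{\mathrm{new}})
\;\Longleftrightarrow\;
u_{\mathrm{new}}\in I^{\mathrm{lcp}}_{1-\alpha}(x_{\mathrm{new}}),
\]
since $q_\phi(x)=\ell^{*}g_\phi(x)$ by construction in \eqref{eq:local_cp_interval}. Marginalizing over $\mathcal D_{\mathrm{data}}$ via the tower property then delivers the unconditional lower bound.

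For the tightness part, I would use continuity of the conditional distribution of $\ell$ to rule out ties almost surely; exchangeability together with no-ties implies $\mathbb P(\ell_{\mathrm{new}}\le Z_{(k)}) = k/(N_{\mathrm c}+1)$ exactly, giving the sandwich bound $k/(N_{\mathrm c}+1)\in[1-\alpha,\,1-\alpha+1/(N_{\mathrm c}+1))$ by the definition of the ceiling function.

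The main obstacle, in my view, is the first step: justifying conditional exchangeability when $g_\phi$ is itself learned from data. It is tempting to worry that pinball-loss training on $\mathcal D_{\mathrm{data}}$ couples $g_\phi$ to the calibration residuals, but this coupling is broken by the explicit independence assumption between $\mathcal D_{\mathrm{data}}$ and $\mathcal D_{\mathrm{cal}}\cup\{(x_{\mathrm{new}},u_{\mathrm{new}})\}$, which is precisely why split calibration is used. A secondary technical point is the off-by-one alignment between the empirical quantile in \eqref{eq:local_cali}, defined via the $1/N_{\mathrm c}$ empirical CDF, and the $k$-th order statistic with $k=\lceil(N_{\mathrm c}+1)(1-\alpha)\rceil$ appearing in the bound; this should be reconciled either by adjusting the index (the standard split-CP convention) or by observing that the definition gives a slightly conservative threshold. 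Everything else is a routine translation of split CP to the normalized score $|u-u_\theta(x)|/(g_\phi(x)\sigma(x))$.
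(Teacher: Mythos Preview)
Your proposal is correct and follows essentially the same route as the paper's proof sketch: condition on $\mathcal D_{\mathrm{data}}$ to freeze $u_\theta,\sigma,g_\phi$, observe that the calibration and test scores are then i.i.d.\ (hence exchangeable), apply the standard conformal rank argument to the $k$-th order statistic, translate the score inequality into the interval-membership event, and marginalize. Your discussion of the off-by-one alignment between the empirical-CDF definition of $\ell^*$ and the order-statistic index $k=\lceil(N_{\mathrm c}+1)(1-\alpha)\rceil$ is an additional observation the paper does not make explicit (its algorithm silently uses $\ell_{(k)}$), but this does not alter the argument.
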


\begin{proof}[Sketch of proof]
Assume $\sigma:\mathcal X\to(0,\infty)$ and $g_\phi:\mathcal X\to(0,\infty)$ are measurable and strictly positive.
Define the scaled residual scores, for $j=1,\dots,N_{\mathrm c}$,
\[
\ell_j = \frac{\bigl|u_j-u_{\theta}(x_j)\bigr|}{g_\phi(x_j)\,\sigma(x_j)},
\qquad
\ell_{\mathrm{new}} = \frac{\bigl|u_{\mathrm{new}}-u_{\theta}(x_{\mathrm{new}})\bigr|}{g_\phi(x_{\mathrm{new}})\,\sigma(x_{\mathrm{new}})}.
\]
Let $\ell_{(1)}\le \cdots \le \ell_{(N_{\mathrm c})}$ denote the order statistics of $\{\ell_j\}_{j=1}^{N_{\mathrm c}}$, and set $k=\lceil (N_{\mathrm c}+1)(1-\alpha)\rceil$.

Conditioning on $\mathcal D_{\mathrm{data}}$, $u_\theta$, $\sigma$, and $g_\phi$ are deterministic and independent of $\mathcal D_{\mathrm{cal}}$ and $(x_{\mathrm{new}},u_{\mathrm{new}})$. 
Since $(x_j,u_j)$ and $(x_{\mathrm{new}},u_{\mathrm{new}})$ are i.i.d., it follows that the random variables $(\ell_1,\dots,\ell_{N_{\mathrm c}},\ell_{\mathrm{new}})$ are exchangeable (conditionally on $\mathcal D_{\mathrm{data}}$). The standard conformal rank argument then yields
\[
\mathbb P\Bigl(\,\ell_{\mathrm{new}}\le \ell_{(k)} \Big| \mathcal D_{\mathrm{data}} \Bigr)
\ge\frac{k}{N_{\mathrm c}+1}
\ge1-\alpha,
\]
where the first inequality becomes equality if the distribution of the scores is continuous (no ties). 
By the definition of the Local-CP interval with $q_\phi(x)=\ell_{(k)}\,g_\phi(x)$, we have
\[
\{\ell_{\mathrm{new}}\le \ell_{(k)}\}
\;\Longleftrightarrow\;
\bigl|u_{\mathrm{new}}-u_\theta(x_{\mathrm{new}})\bigr|
\le
\ell_{(k)}\,g_\phi(x_{\mathrm{new}})\,\sigma(x_{\mathrm{new}})
\;\Longleftrightarrow\;
u_{\mathrm{new}}\in I^{\mathrm{lcp}}_{1-\alpha}(x_{\mathrm{new}}).
\]
Taking expectations with respect to $\mathcal D_{\mathrm{data}}$ gives
\[
\mathbb P\bigl(u_{\mathrm{new}}\in I^{\mathrm{lcp}}_{1-\alpha}(x_{\mathrm{new}})\bigr)
\;\ge\; 1-\alpha.
\]
If the score distribution is continuous, then
\[
\mathbb P\Bigl(\,\ell_{\mathrm{new}}\le \ell_{(k)}\Big|\mathcal D_{\mathrm{data}} \Bigr)=\frac{k}{N_{\mathrm c}+1},~~\text{hence}~~
\mathbb P\bigl(u_{\mathrm{new}}\in I^{\mathrm{lcp}}_{1-\alpha}(x_{\mathrm{new}})\bigr)
=\frac{k}{N_{\mathrm c}+1}
\in \Bigl[1-\alpha,\,1-\alpha+\tfrac{1}{N_{\mathrm c}+1}\,\Bigr),
\]
which yields the stated results.
\end{proof}

This theorem establishes that local CP inherits the finite-sample marginal guarantees of standard conformal prediction, while providing $x$-dependent intervals that adapt to spatial variations in uncertainty. Numerical validation is presented in the next subsection.

\begin{algorithm}[t]
\caption{Local Conformal Prediction (Local CP)}
\label{alg:local_q_scaled_cp}
\begin{algorithmic}[1]
\Statex \textbf{Input:} Training data $\mathcal D_{\mathrm{data}}=\{(x_i,u_i)\}_{i=1}^{N_{\mathrm{d}}}$, calibration data $\mathcal D_{\mathrm{cal}}=\{(x_j,u_j)\}_{j=1}^{N_{\mathrm{c}}}$, miscoverage level $\alpha\in(0,1)$, baseline model $(u_{\theta},\sigma)$.
\vspace{2pt}
\Statex \textbf{I. Fit conditional quantile predictor.}
\For{$i=1,\dots,N_{\mathrm{d}}$}
  \State Compute conformity score: $s_i \gets \dfrac{|u_i - u_{\theta}(x_i)|}{\sigma(x_i)}$.
\EndFor
\State Train $g_{\phi}:\mathcal X\to\mathbb R_{>0}$ by minimizing the pinball loss at level $1-\alpha$ on $\{(x_i,s_i)\}_{i=1}^{N_{\mathrm{d}}}$ by~\eqref{eq:pinnball}.

\vspace{2pt}
\Statex \textbf{II. Calibration of multiplicative factor.}
\For{$j=1,\dots,N_{\mathrm{c}}$}
  \State Compute calibration score:
  $\ell_j \gets \dfrac{|u_j-u_{\theta}(x_j)|}{g_{\phi}(x_j)\,\sigma(x_j)}$.
\EndFor
\State Let $\ell_{(1)}\le\cdots\le \ell_{(N_{\mathrm{c}})}$ be the order statistics of $\{\ell_j\}$.
\State Set $k \gets \lceil (N_{\mathrm{c}}+1)(1-\alpha)\rceil$ and $\ell^*\gets \ell_{(k)}$.

\vspace{2pt}
\Statex \textbf{III. Prediction at a new test point $x_{\mathrm{new}}$.}
\State Define local quantile estimate $q_{\phi}(x_{\mathrm{new}})=\ell^*\cdot g_{\phi}(x_{\mathrm{new}})$.
\State Construct prediction interval
\[
I^{\mathrm{lcp}}_{1-\alpha}(x_{\mathrm{new}})
=\bigl[u_{\theta}(x_{\mathrm{new}})-q_{\phi}(x_{\mathrm{new}})\,\sigma(x_{\mathrm{new}}),\;\;
      u_{\theta}(x_{\mathrm{new}})+q_{\phi}(x_{\mathrm{new}})\,\sigma(x_{\mathrm{new}})\bigr].
\]
\State \Return $I^{\mathrm{lcp}}_{1-\alpha}(x_{\mathrm{new}})$.
\end{algorithmic}
\end{algorithm}

\subsection{Numerical Results}
\label{sec:sub:local_numerics}

Before presenting numerical results, we introduce \emph{sharpness}, a commonly used metric for assessing the informativeness of prediction intervals~\cite{hu2022robust}. It is defined as the average interval width:
\begin{equation}
\text{Sharpness} := \frac{1}{N_{\rm test}} \sum_{i=1}^{N_{\rm test}} \big(U_i - L_i\big).
\label{eq:sharpness}
\end{equation}
Lower values indicate narrower, more informative intervals. However, sharpness is meaningful only when models are equally calibrated, since overly narrow intervals under poor calibration may simply reflect under-coverage rather than genuine confidence~\cite{hu2022robust}.


\subsubsection{1D Damped Harmonic Oscillator Equation}
\label{sec:sub:sub:1dosci}

We consider the one-dimensional damped harmonic oscillator defined on the interval $T=(0,5)$:
\begin{equation}
u''(t) + 2\zeta\omega u'(t) + \omega^2 u(t) = f(t), 
\qquad t \in T,
\end{equation}
subject to initial conditions $u(0)=u_0$ and $u'(0)=v_0$. The oscillator parameters are fixed as $\omega = 2\pi$ and damping ratio $\zeta=0.05$. For $f(x)=0$, the corresponding analytical solution is
\[
u^*(t) = e^{-\zeta\omega t}\!\left(
u_0 \cos(\tilde{\omega} t) + \frac{v_0 + \zeta \omega u_0}{\tilde{\omega}} \sin(\tilde{\omega} t)
\right),
\qquad 
\tilde{\omega}=\omega\sqrt{1-\zeta^2}.
\]


We generate $300$ training samples and $150$ calibration samples from the analytical solution $u^*(t)$. Another $1{,}000$ test samples are drawn to thoroughly test the method's robustness for a downstream modeling task. To introduce heteroskedasticity, first, an i.i.d. Gaussian perturbation with standard deviation $\sigma=0.05$ is added to the simulated $u^*$.
\[
\tilde{u}_i = u^*(t_i) + \sigma\,\varepsilon_i, 
\qquad \varepsilon_i \sim \mathcal N(0,1).
\]
The perturbed observations are further corrupted with additive Gaussian noise with location-dependent variance. Specifically, the point-wise noise level is:
\begin{align}
    &\sigma_{\mathrm{het}}(t) = \sum_{r=1}^3 b \exp\Bigl(-\frac12\Bigl(\frac{t-c_r}{w_r}\Bigr)^2\Bigr)\mathbf 1\big\{|t-c_r|\le w_r\big\}, \nonumber \\
    &\text{where } (c_r,w_r) \in \{(1.0,\,0.2),\;(2,\,0.2),\;(3,\,0.2)\},
    \label{eqn:local_cp_1D}
\end{align}
where we define the noise bump as $b=0.3$. Noisy samples are then drawn as $u_i = \tilde{u}_i + \sigma_{\mathrm{het}}(t_i)\,z_i,\ z_i \sim \mathcal N(0,1)$, thereby capturing the intrinsic variability.



Figure~\ref{fig:lscp:modelplot} compares prediction intervals from standard CP and local CP at at significance level $\alpha=0.1$. The standard CP, relying on a single global quantile, fails to capture domain-varying noise: intervals are overly wide in smooth regions yet too narrow over the noisy ``islands" (cf.~\eqref{eqn:local_cp_1D}), causing local undercoverage. In contrast, local CP employs an $x$-dependent quantile that adapts interval widths to local variability, yielding tight bands in stable regions and wider bands in noisy areas, thus aligning more closely with the true heteroskedastic structure.

\begin{figure}[h]
\centering
\includegraphics[width=0.95\linewidth]{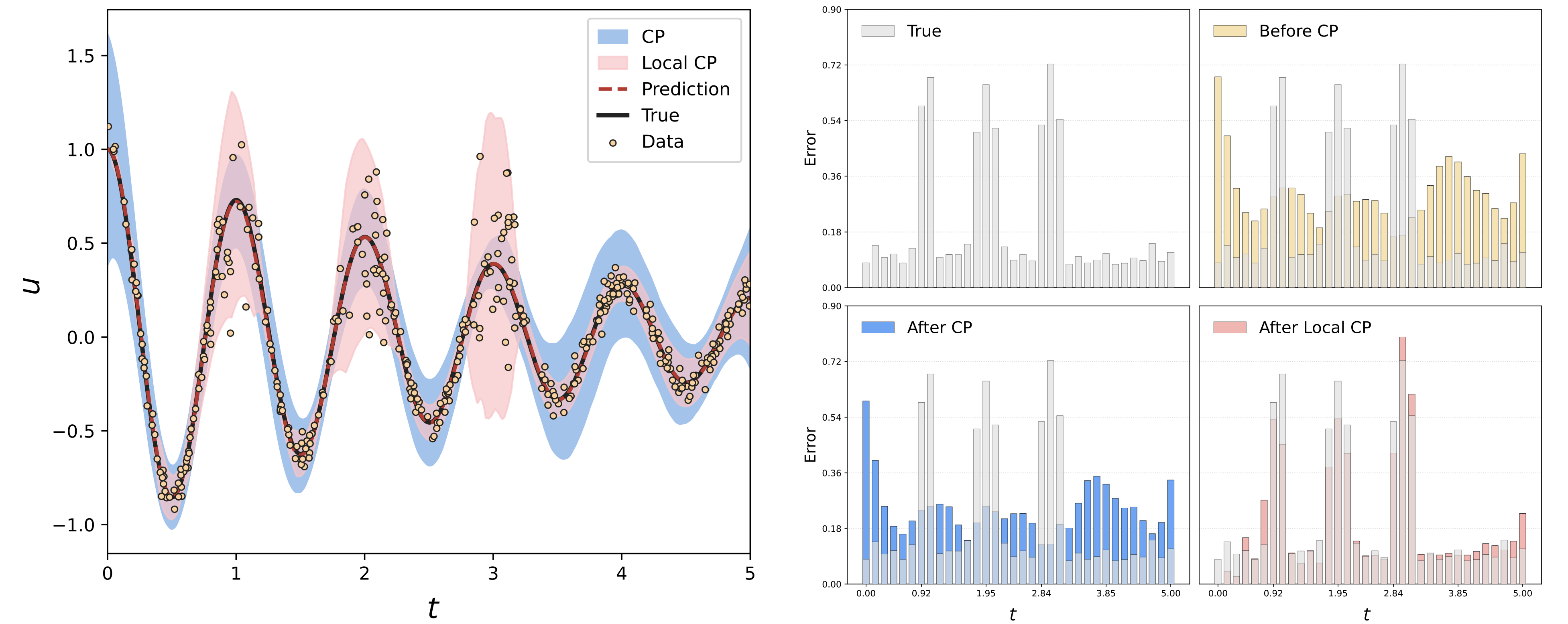}
\caption{Comparison of CP and local CP at $\alpha=0.1$. \textbf{Left:} prediction intervals with CP (blue) and local CP (red). \textbf{Right:} The absolute-error distributions, the predictive interval width before CP, after CP, and after local CP, showing that local CP achieves superior calibration.}
\label{fig:lscp:modelplot}
\end{figure}

Quantitative results are reported in Table~\ref{tab:local:metrics}, 
with empirical coverage curves for different evaluation regions shown in Figure~\ref{fig:lscp:covplot}. We see that although both CP and local CP achieve good global coverage, their performance differ markedly in regions of elevated noise. At the expected coverage level $1-\alpha=0.95$, the standard CP attains only $0.76$ empirical coverage across the three high-noise islands, whereas local CP achieves $0.94$, closely aligning with the target level of $0.95$. Importantly, this improvement in local coverage is achieved without sacrificing sharpness. Over the entire domain, local CP attains a sharpness of $0.66$, significantly lower than that of the standard CP ($0.9476$), thereby demonstrating its ability to produces intervals that are both more calibrated and more informative than those of CP. Sharpness values for the local regions of elevated noise are omitted, as the standard CP model is not calibrated in these regions and therefore not directly comparable.

\begin{figure}[t]
\centering
\includegraphics[width=0.90\linewidth]{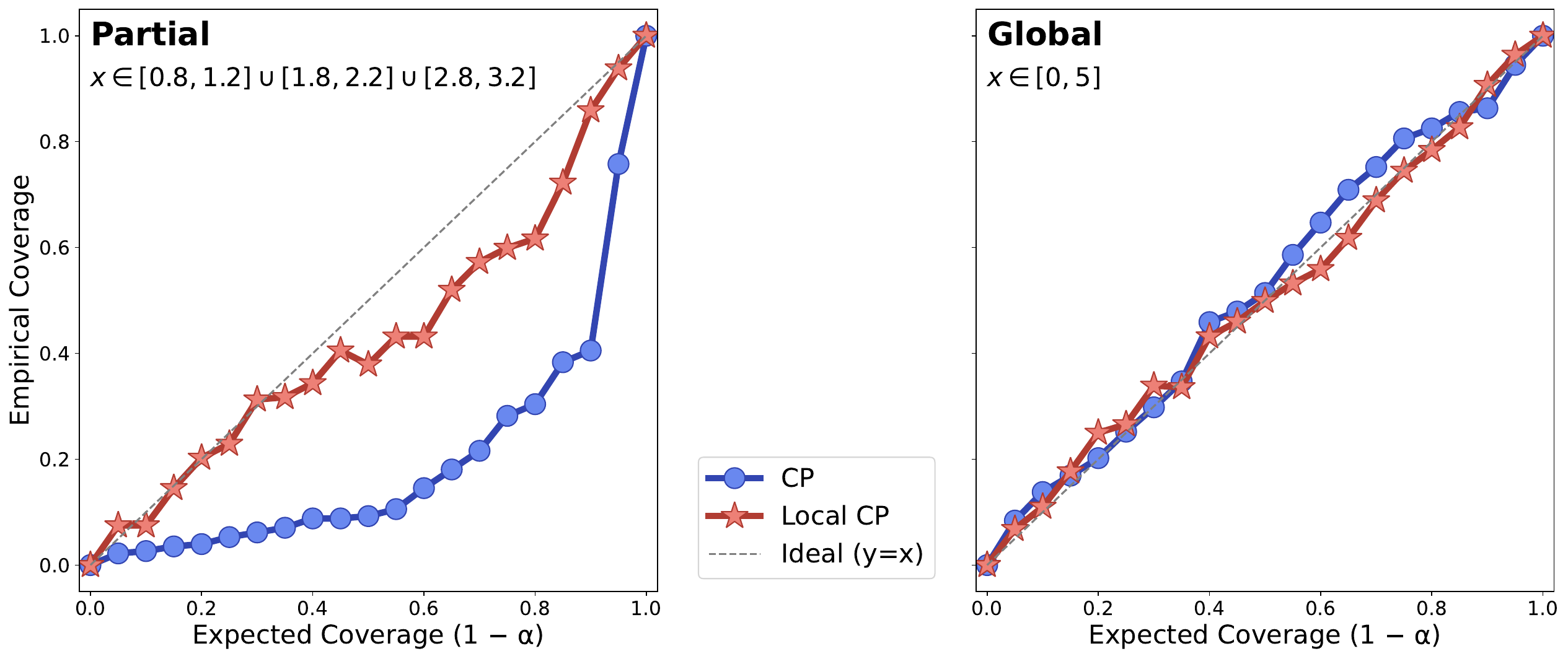}
\caption{Empirical coverage plots on the three noisy regions \textbf{(left)} and on the global domain \textbf{(right)}. Local CP achieves closer alignment with the expected $1-\alpha$ line, indicating improved calibration compared to standard CP.}
\label{fig:lscp:covplot}
\end{figure}

The coverage curves in Figure~\ref{fig:lscp:covplot} further underscore these findings. Restricting the evaluation to the heteroskedastic regions, the CP coverage curve consistently falls below the $y=x$ diagonal for all values of $\alpha$, indicating a systematic deficiency in quantifying heteroskedastic points. By contrast, local CP adheres to the diagonal, achieving a substantially smaller ACD of $0.0669$, which is more than a fourfold improvement over standard CP, whose ACD is much larger at $0.2925$. This result provides further evidence that local CP systematically outperforms CP, particularly in regions characterized by elevated noise.

\begin{table}[t]
  \captionsetup{skip=8pt}
  \centering
\caption{Performance metrics for CP and local CP at $\alpha=0.05$ are reported both partially (within heteroskedastic regions) and globally (over the full domain). Sharpness for the partial regions is omitted, since CP is uncalibrated there and thus not comparable. ACD is evaluated over 19 equally spaced $\alpha_k \in [0.05, 0.95]$ on the test set.}
  \label{tab:local:metrics}
  \small
  \begin{tabular}{cccccc}
    \toprule
    \multirow{2}{*}{\textbf{Test Regions}} & 
    \multirow{2}{*}{\textbf{Model}} &
    \multicolumn{2}{c}{\textbf{Coverage}} &
    \multirow{2}{*}{\textbf{Sharpness}} &
    \multirow{2}{*}{\textbf{ACD}} \\
    \cmidrule(lr){3-4}
    & & \textbf{Expected} & \textbf{Empirical} & & \\
    \midrule
    \multirow{2}{*}{Partial}
      & CP  & 0.95 & 0.76 & - & 0.2925 \\
      & Local CP & 0.95 & \textbf{0.94} & - & \textbf{0.0669} \\
    \midrule
    \multirow{2}{*}{Global}
      & CP  & 0.95 & 0.94 & 0.9476 & 0.0250 \\
      & Local CP & 0.95 & 0.96 & \textbf{0.6600} & \textbf{0.0183} \\
    \bottomrule
  \end{tabular}
\end{table}

\subsubsection{2D Allen-Cahn Equation}
\label{sec:sub:sub:para:ac2d}

To further demonstrate the adaptiveness of the local conformal prediction, we adopt the same data generation and Geometric-distance PINN training procedures as described in Section~\ref{sec:ac2d}, and similarly increase the test set size to $2{,}000$ to simulate an industrial scenario to test method's reliability in higher dimensional input space. Building upon the previous global noise setting, we introduce irregularly shaped regions into the two-dimensional input space, within which all points are perturbed by higher noise. To better reflect realistic conditions, we smooth the region boundaries using a sigmoid transformation, ensuring that the noise level decays gradually. 

Figure~\ref{fig:lscp:2dac} compares the UQ intervals generated by the Geometric-distance baseline, CP, and local CP, respectively, illustrating the flexibility of local CP when confronted with irregular noise patterns. While CP improves calibration of the uncertainty intervals compared to the uncalibrated baseline, the use of a single global conformity score quantile fails to adapt to spatially heterogeneous noise, leading to locally under-covered and over-covered, for irregular and smooth regions respectively. In contrast, local CP effectively captures location-sensitive noise patterns across all three scenarios. It adaptively expands interval widths where needed, while preserving narrow, unperturbed intervals in smoother regions. This finding is consistent with the 1D example in Section~\ref{sec:sub:sub:1dosci}, where local CP effectively captured spatially varying uncertainty. We observe from Figure~\ref{fig:lscp:2dac_cov} that both CP and local CP achieve valid coverage when evaluated over the full domain, consistent with their finite-sample guarantees (cf.~Theorem~\ref{thm:cp} and Theorem~\ref{thm:local_cp}). The advantage of local CP is most pronounced at the local scale: by adapting interval widths to spatial heterogeneity, it corrects the under-coverage in high-variance regions and the over-coverage in smooth regions that persist under a single global quantile. Consequently, improvements in global metrics appear more modest, since averaging over the domain tends to mask spatial variability.


\begin{figure}[h]
\centering
\includegraphics[width=0.95\linewidth]{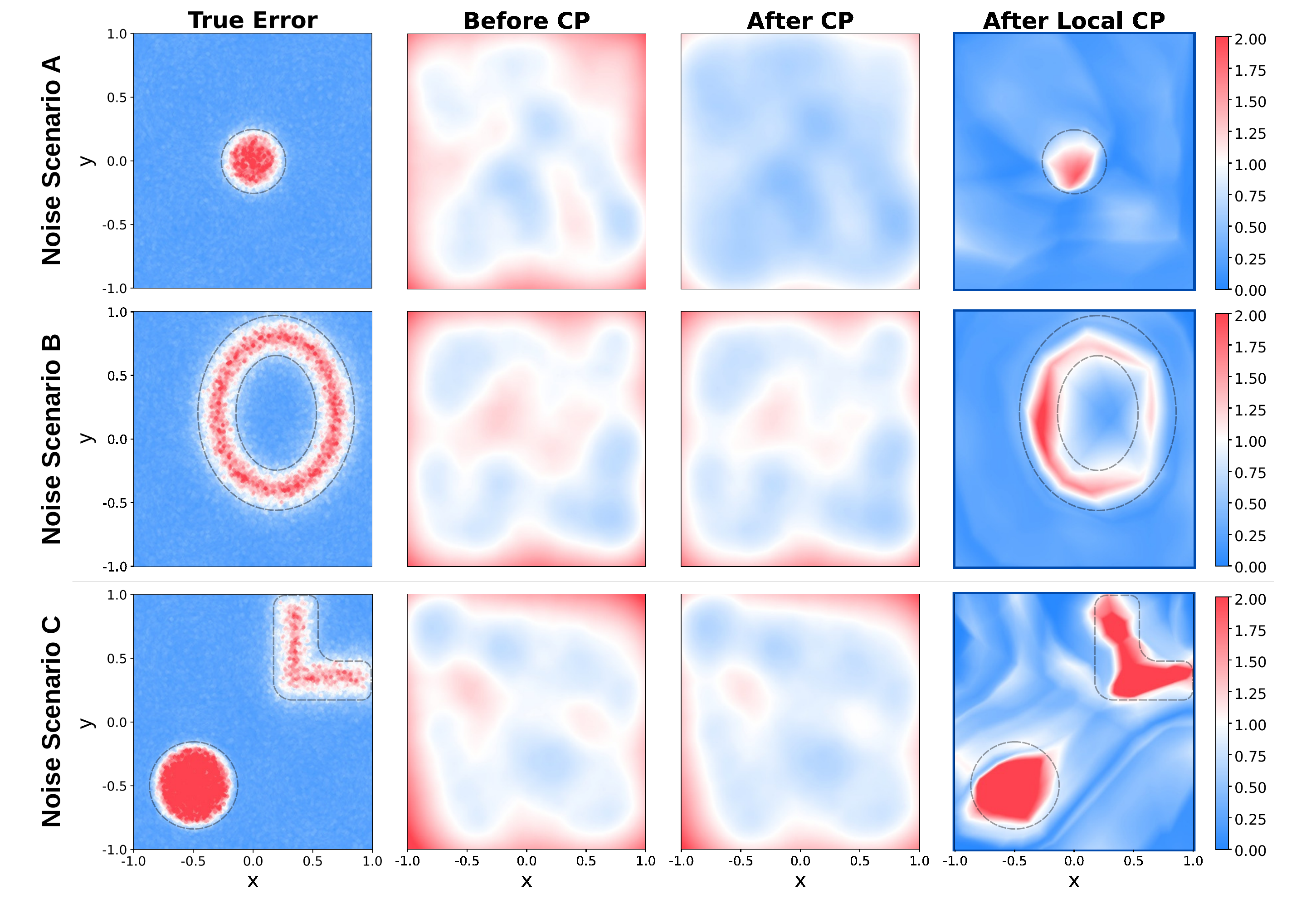}
\caption{Geometric-distance PINN uncertainty calibration for the 2D Allen–Cahn equation under different heteroskedastic noise patterns. \textbf{Row:} Different noise patterns. \textbf{Column:} The true absolute-error distributions and the interval widths for the baseline model, CP, and local CP at $\alpha=0.05$, respectively, from the first column to the fourth column. The irregular noisy regions are distinguished with dashed lines.} 
\label{fig:lscp:2dac}
\end{figure}

\begin{figure}[h]
\centering
\includegraphics[width=1.0\linewidth]{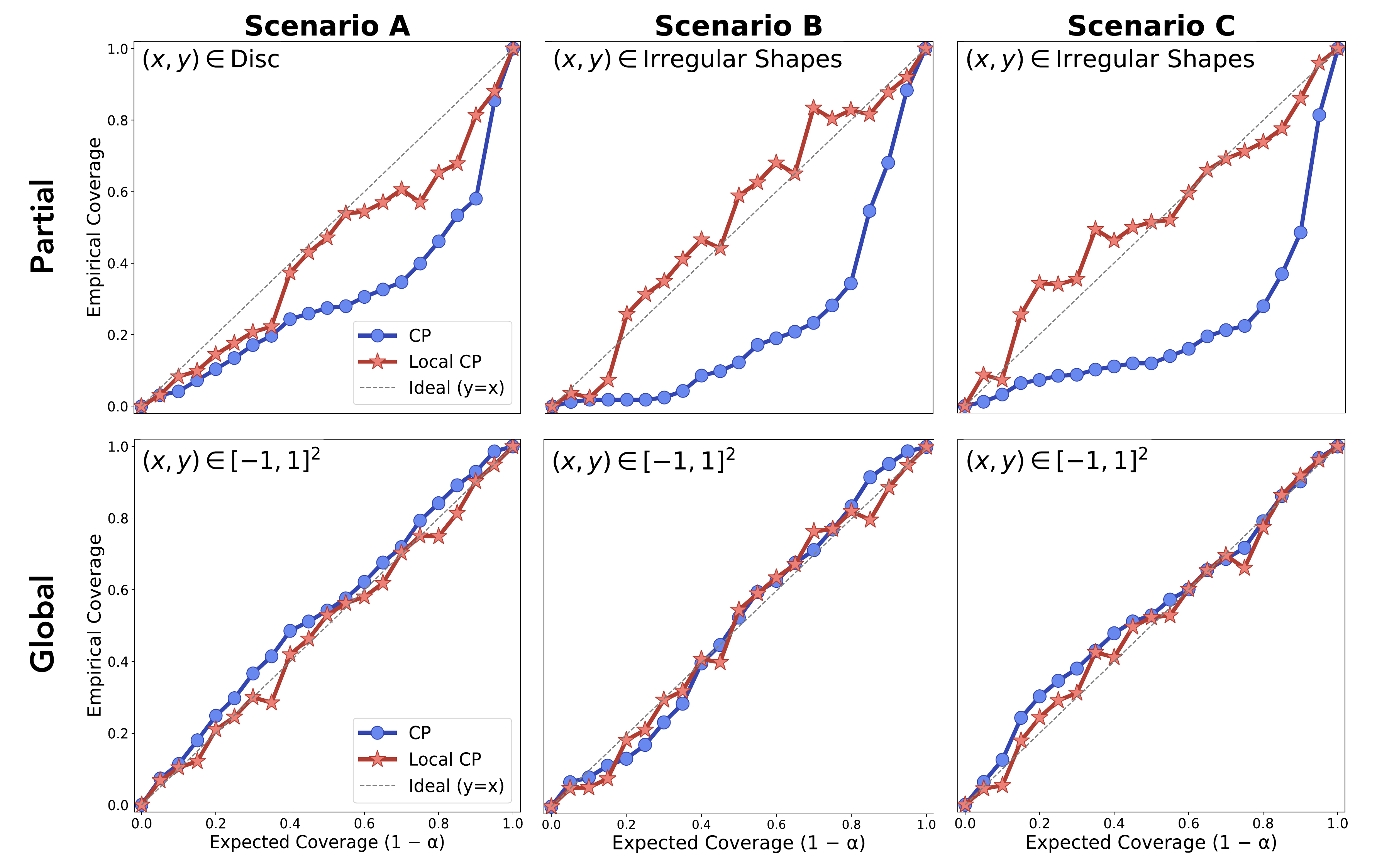}
\caption{Empirical coverage curves of the Geometric-distance PINN across three local noise scenarios. \textbf{Top:} results restricted to noise-elevated sub-regions. \textbf{Bottom:} results over the full domain $(x,y)\in[-1,1]^2$. Each panel compares standard conformal prediction (CP, blue) and local conformal prediction (local CP, red) against the expected = empirical reference (dashed).}
\label{fig:lscp:2dac_cov}
\end{figure}


\section{Conclusion and Outlook}
\label{sec:conclusion}

In this work, we propose a CP framework for calibrating the heuristics UQ in PINNs. Unlike heuristic or Bayesian approaches, the method is distribution-free and yields prediction intervals with rigorous finite-sample coverage guarantees. By incorporating local quantile estimation, the framework achieves spatially adaptive uncertainty quantification. In addition, we systematically evaluate a range of heuristic UQ methods and metrics, providing a comprehensive assessment that underscores the robustness of our framework. Numerical experiments on benchmark PDEs, including Poisson, Allen–Cahn, and Helmholtz equations, demonstrate that the proposed method consistently delivers reliable calibration. These results highlight the potential of CP to bridge deterministic PINNs with statistically principled UQ, thereby advancing the reliability and trustworthiness of PINN-based scientific computing.

Promising directions for future work include extensions to inverse and partially observed problems, time-dependent PDEs, and integration with operator-learning paradigms such as DeepONets and Fourier Neural Operators. In addition, the flexibility of the framework allows for alternative choices of nonconformity scores, such as energy-norm residuals, multi-output joint measures, hybrid residual–variance scores, or learned data-driven variants, thereby enabling task-specific and adaptive calibration.

\appendix

\section{Supplementary Details for Bayesian PINNs}
\label{sec:apd:vi}

\subsection{Variational Inference (VI)}
\label{sec:sub:variational_inference}

Variational inference (VI) approximates the generally intractable posterior distribution by a tractable parametric family through an optimization task. Assuming a fully factorized Gaussian surrogate posterior, we write
\begin{equation}
  q_{\phi}(\theta)
  \;=\;
  \prod_{j=1}^{d_\theta}
  \mathcal N\bigl(\theta_j | \mu_j,\sigma_j^{2}\bigr),
  \quad
  \sigma_j = \operatorname{softplus}(\rho_j),
\end{equation}
where $\phi = \{(\mu_j,\rho_j)\}_{j=1}^{d_\theta}$ are the variational parameters, and the softplus transformation guarantees strictly positive standard deviations~\cite{blundell_weight_2015}.  

With this surrogate distribution, VI converts Bayesian inference into the minimization of the negative evidence lower bound (ELBO):
\begin{align}
  \min_{\phi}- \mathcal L_{\mathrm{ELBO}}(\phi) 
  &:= -
     \underbrace{\mathbb E_{q_{\phi}}
       \bigl[\log p(\mathcal D|\theta)\bigr]}_{\text{expected log–likelihood}}
     +
     \underbrace{\mathrm{KL}\bigl(q_{\phi}(\theta)||p_{0}(\theta)\bigr)}_{\text{complexity penalty}}.
\end{align}

The first term, the expected log-likelihood, is typically approximated via Monte Carlo sampling of $\theta \sim q_{\phi}(\theta)$. The second term, the KL divergence, admits a closed form when both $q_{\phi}(\theta)$ and the prior $p_0(\theta)$ are Gaussian. For a single parameter dimension:
\begin{equation}
  \mathrm{KL}
  \bigl(
    q_{\phi}(\theta)||p_{0}(\theta)
  \bigr)
  =
  \log\frac{\sigma_{0}}{\sigma}
  +\frac{\sigma^{2}+\mu^{2}}{2\sigma_{0}^{2}}
  -\tfrac12,
  \quad \text{where } ~~
  \begin{aligned}
    q_{\phi}(\theta) &= \mathcal{N}(\mu, \sigma^2), \\
    p_{0}(\theta) &= \mathcal{N}(0, \sigma_0^2).
  \end{aligned}
\end{equation}

During training, we employ the reparameterization trick
\(
\theta = \mu + \sigma \varepsilon,~
\varepsilon \sim \mathcal{N}(0, I),
\)
yielding unbiased, low-variance gradient estimates of $\nabla_{\!\phi}\mathcal L_{\mathrm{ELBO}}$ through standard backpropagation. To accelerate training, we use mini-batches $\mathcal B \subset \mathcal D$, replacing~\eqref{eq:elbo} with
\begin{align}
  \min_{\phi}\ - \mathcal L_{\mathrm{ELBO}}(\phi) 
  &=
  -\underbrace{\mathbb E_{q_{\phi}}
       \!\bigl[\log p(\mathcal B|\theta)\bigr]}_{\text{expected log–likelihood}}
     + \underbrace{\mathrm{KL}\bigl(q_{\phi}(\theta)||p_{0}(\theta)\bigr)}_{\text{complexity penalty}},
\end{align}
and optimize using the Adam algorithm with typically one Monte Carlo sample per batch~\cite{yang_b-pinns_2021}.

\subsection{Hamiltonian Monte Carlo (HMC)}
\label{sec:apd:hmc}

Consider the posterior
\begin{equation}
    p(\theta|\mathcal D)
    \propto p(\mathcal D|\theta)p_0(\theta)
    = \exp\bigl(-U(\theta)\bigr),
    \label{eq:hmc_posterior}
\end{equation}
where the potential energy is $
    U(\theta) \triangleq -\log p(\mathcal D|\theta) - \log p_0(\theta)$.
HMC augments $\theta$ with an auxiliary momentum variable $r \in \mathbb R^{d_\theta}$, defining the Hamiltonian
\begin{equation}
  H(\theta, r)=
  U(\theta) + V(r)
  = -\log p(\mathcal D | \theta) - \log p_0(\theta)
  + \tfrac{1}{2} r^{\mathsf T} M^{-1} r,
\end{equation}
where $M$ is a symmetric positive-definite mass matrix (often $M=I$). The kinetic energy $V(r) = \tfrac{1}{2} r^{\mathsf T} M^{-1} r$ corresponds to a Gaussian momentum prior $r \sim \mathcal N(0, M)$. The joint distribution is then
\begin{equation}
  p(\theta, r |\mathcal D) \propto \exp\bigl(-H(\theta, r)\bigr).
  \label{eq:joint_distribution}
\end{equation}

At each iteration, HMC samples a fresh momentum $r_0 \sim \mathcal N(0, M)$, and evolves $(\theta, r)$ according to Hamilton’s equations:
\begin{subequations}
\label{eq:hmc_ode}
\begin{align}
  \frac{{\rm d}\theta}{{\rm d}t} &= M^{-1} r, \\
  \frac{{\rm d}r}{{\rm d}t}      &= -\nabla_{\theta} U(\theta).
\end{align}
\end{subequations}

Exact integration conserves $H$ and yields proposals $(\theta',r')$ lying on the Hamiltonian’s energy surface, with $\theta'$ retained as the new sample. In practice, we use the leapfrog discretization, followed by a Metropolis–Hastings correction to offset integration errors~\cite{betancourt2017conceptual}.

\section{Supplementary Numerical Experiments}

In this section, we report the training hyperparameters used for both deterministic and Bayesian PINNs in solving the benchmark PDEs presented in Section~\ref{sec:numerics}.

\label{sec:apd:num}
\begin{table}[t]
  \captionsetup{skip=6pt}
  \centering
  \caption{General training hyperparameter settings across PDEs and models.}
  \label{tab:training_parameter}
  \small
  \begin{tabular}{llcccccc}
    \toprule
    \multirow{2}{*}{\textbf{PDEs}} & \multirow{2}{*}{\textbf{Models}} &
    \multicolumn{3}{c}{\textbf{Loss Weights}} &
    \multirow{2}{*}{\textbf{Epochs}} &
    \multirow{2}{*}{\textbf{Learning rate}} &
    \multirow{2}{*}{\textbf{Seed}} \\
    \cmidrule(lr){3-5}
     & & $\lambda_{\mathrm{pde}}$ & $\lambda_{\mathrm{bc}}$ & $\lambda_{\mathrm{data}}$ & & & \\
    \midrule
    \multirow{5}{*}{2D Allen--Cahn} 
        & GD       & 1.0 & 5.0  & 1.0 & 4500  & 1e-3 & 10 \\
        & LD       & 1.0 & 5.0  & 1.0 & 5000  & 1e-3 & 10 \\
        & Dropout  & 1.0 & 10.0  & 1.0 & 5000  & 1e-3 & 10 \\
        & VI       & 3.0 & 10.0  & 1.0 & 35000 & 1e-3 & 10 \\
        & HMC      & 3.0 & 10.0 & 1.0 & 5000  & 1e-3 & 10 \\
    \midrule
    \multirow{3}{*}{3D Helmholtz} 
        & GD       & 1.0 & 5.0 & 1.0 & 5000  & 1e-3 & 456 \\
        & LD       & 1.0 & 5.0 & 1.0 & 5000  & 1e-3 & 456 \\
        & Dropout  & 1.0 & 10.0 & 1.0 & 5000  & 1e-3 & 456 \\
    \midrule
    \multirow{1}{*}{Ext. 1D Oscillator} 
        & GD       & 1.0 & 10.0  & 3.0 & 20000  & 1e-3 & 95 \\
    \midrule
    \multirow{1}{*}{Ext. 2D Scenario A} 
        & GD       & 1.0 & 5.0  & 1.0 & 4500  & 1e-3 & 259 \\
    \multirow{1}{*}{Ext. 2D Scenario B} 
        & GD       & 1.0 & 5.0  & 1.0 & 4500  & 1e-3 & 711 \\
    \multirow{1}{*}{Ext. 2D Scenario C} 
        & GD       & 1.0 & 5.0  & 1.0 & 4500  & 1e-3 & 345 \\
    \bottomrule
  \end{tabular}
\end{table}

\bibliographystyle{elsarticle-num} 
\bibliography{ft.bib}





\end{document}